\def\eqref#1{equation~\ref{#1}}
\def\1{\bm{1}}
\DeclareMathAlphabet{\mathsfit}{\encodingdefault}{\sfdefault}{m}{sl}
\SetMathAlphabet{\mathsfit}{bold}{\encodingdefault}{\sfdefault}{bx}{n}
\DeclareMathOperator*{\concat}{\scalerel*{\Vert}{\sum}}
\title{Flow-Attentional Graph Neural Networks}
\author{\name Pascal Plettenberg \email plettenberg@uni-kassel.de \\
      \addr Intelligent Embedded Systems, University of Kassel
      \AND
      \name Dominik Köhler \email dkoehler@uni-kassel.de \\
      \addr Intelligent Embedded Systems, University of Kassel
      \AND
      \name Bernhard Sick \email bsick@uni-kassel.de\\
      \addr Intelligent Embedded Systems, University of Kassel
      \AND
      \name Josephine M. Thomas \email thomasj@uni-greifswald.de\\
      \addr GAIN Group, Institute of Data Science, University of Greifswald}
\theoremstyle{plain}
\newtheorem{theorem}{Theorem}[section]
\newtheorem{lemma}[theorem]{Lemma}
\newtheorem{corollary}[theorem]{Corollary}
\begin{document}

\maketitle

\begin{abstract}
Graph Neural Networks (GNNs) have become essential for learning from graph-structured data. However, existing GNNs do not consider the conservation law inherent in graphs associated with a flow of physical resources, such as electrical current in power grids or traffic in transportation networks, which can lead to reduced model performance. To address this, we propose \emph{flow attention}, which adapts existing graph attention mechanisms to satisfy Kirchhoff’s first law. Furthermore, we discuss how this modification influences the expressivity and identify sets of non-isomorphic graphs that can be discriminated by flow attention but not by standard attention. Through extensive experiments on two flow graph datasets—electronic circuits and power grids—we demonstrate that flow attention enhances the performance of attention-based GNNs on both graph-level classification and regression tasks.
\end{abstract}

\section{Introduction}
\label{intro}

Graph neural networks (GNNs) \citep{scarselli2008graph, kipf2017semi} have emerged as a powerful framework that extends the scope of deep learning from Euclidean to graph-structured data, which is prevalent across many real-world domains, such as social networks \citep{fan2019graph}, recommender systems \citep{wu2022graph}, materials science \citep{reiser2022graph}, or epidemiology \citep{liu2024review}. Especially attention-based GNNs have become increasingly popular due to their ability to select relevant features adaptively \citep{sun2023attention}. As graph data becomes increasingly common, advancing GNN architectures is crucial for improving performance in tasks such as node classification \citep{hamilton2017inductive}, graph regression \citep{gilmer2017neural}, or link prediction \citep{zhang2018link}.

In many important applications of GNNs, graphs are naturally associated with a flow of physical resources, such as electrical current in electronic circuits \citep{sanchez2023comprehensive} or power grids \citep{liao2021review}, traffic in transportation networks \citep{jiang2022graph}, water in river networks \citep{sun2021explore}, or raw materials and goods in supply chains \citep{kosasih2022machine}. All nodes in these \emph{resource flow graphs}, except for source and target nodes, are subject to Kirchhoff's first law, which states that the sum of all incoming and outgoing flows must be zero, reflecting the conservation of resources. By contrast, \emph{informational graphs}—such as computation graphs, social networks, or citation networks—are not associated with any physical flow but rather represent relationships or information transfer. Information can be arbitrarily duplicated in these graphs, unlike in flow graphs, where such duplication would violate the conservation law.

As a result, two non-isomorphic graphs may be equivalent as informational graphs but non-equivalent as flow graphs. For example, in a computational graph, the output of a sine operation can be freely duplicated. Thus, the two non-isomorphic graph structures in Fig.~\hyperref[fig:intro]{\ref*{fig:intro}a} represent the same computation. However, the same graph structures may also represent electronic circuits governed by Kirchhoff's first law (see Fig.~\hyperref[fig:intro]{\ref*{fig:intro}b}). In this case, the two circuits are different because combining or splitting resistors changes their electrical properties. 

\begin{figure}[t]
\begin{center}
\includegraphics[width=0.618\textwidth]{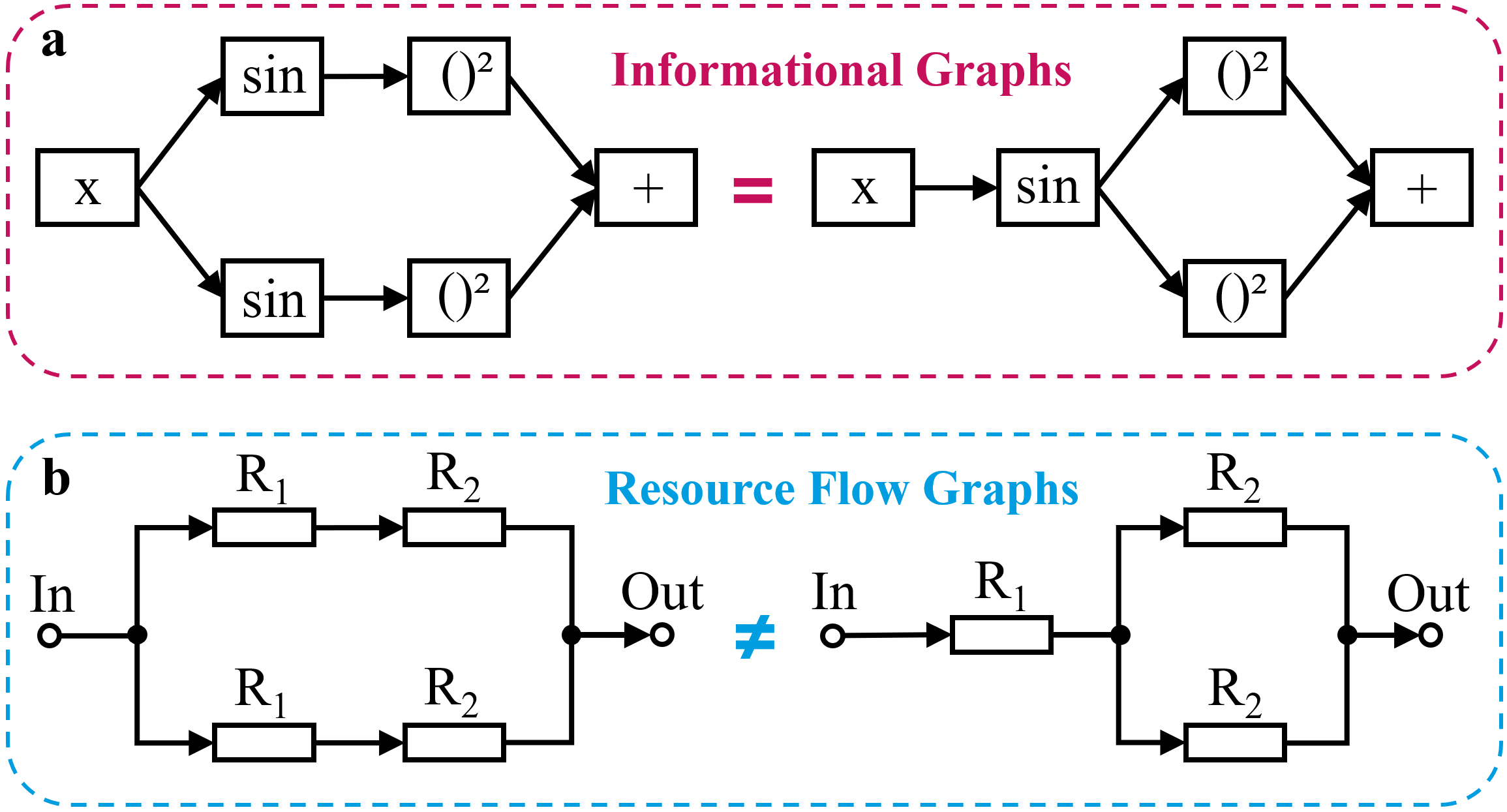}
\end{center}
\caption{Two non-isomorphic graphs that are equivalent as informational graphs, but different as resource flow graphs. \textbf{a} The two different directed graph structures represent the same computation (example adapted from \citet{zhang2019dvae}). \textbf{b} The same graph structures as above represent different electronic circuits.}
\label{fig:intro}
\end{figure}

Electronic circuits and other resource flow graphs can often be represented as directed acyclic graphs (DAGs). However, GNNs specifically tailored to DAGs typically encode the rooted trees of the output nodes rather than the exact graph structure. Hence, if two DAGs exhibit the same computation tree (e.g., the graphs in Fig.~\ref{fig:intro}), they cannot be distinguished, which limits the performance in many graph learning tasks. While these graph structures might be interchangeable for pure “information” tasks, a sufficiently expressive GNN should map them to distinct representations when performing tasks where physical flows are relevant. 

\textbf{Main Contributions.} Inspired by the conservation law in resource flow graphs, we propose \emph{flow attention} on graphs, which normalizes attention scores across outgoing neighbors instead of incoming ones. This simple but effective modification can be applied to any attention-based GNN and allows the model to better capture the physical flow of a graph. We discuss the expressivity of the resulting models and demonstrate that flow attention enables the discrimination of any DAG from its computation tree. Based on this observation, we propose \emph{FlowDAGNN}, a flow-attentional GNN for DAGs. Finally, we conduct extensive experiments on multiple datasets, including cascading failure analysis on power grids and property prediction on electronic circuits, covering undirected graphs and DAGs. Our results indicate that flow attention improves the performance of attention-based GNNs on graph-level classification and regression tasks.~\footnote{The code is available at \url{https://github.com/pasplett/FlowGNN}.}
\section{Related Work}
\label{relatedwork}

In recent years, many new GNN models have been specifically designed for different graph types \citep{thomas2023graph}. Despite their fundamental differences, informational graphs and flow graphs are mainly treated by the same basic message-passing layers, such as GraphSAGE \citep{hamilton2017inductive}, GAT \citep{velivckovic2018graph}, or GIN \citep{xu2019powerful}. In these models, messages exchanged between neighboring nodes do not depend on the number of recipients. Instead, the information is arbitrarily duplicated and passed to all neighbors. GCN \citep{kipf2017semi} applies a symmetric neighborhood normalization but exhibits limited expressivity and performance. Attention-based GNNs, such as GAT, GATv2 \citep{brody2021attentive}, or UniMP \citep{shi2020masked}, adaptively weight neighboring node features, leading to improved representation learning. However, the attention weights are obtained through normalization across incoming neighbors, allowing for arbitrary message duplication. Our approach normalizes across outgoing neighbors instead, which better captures the physical flow of a graph while preserving the benefits of graph attention.

Many flow graphs, including the example graphs in Fig.~\ref{fig:intro}, can be naturally expressed as DAGs, e.g., operational amplifiers \citep{dong2023cktgnn} or material flow networks \citep{perera2018topological}. Before GNNs were introduced, recursive neural networks were applied to DAGs \citep{sperduti1997supervised, frasconi1998general}, and contextual recursive cascade correlation was proposed to overcome limitations in expressivity \citep{hammer2005universal}. However, these early works lacked the advantages of modern GNNs, which have been extended to DAGs in recent years \citep{zhang2019dvae, thost2021directed}. In directed acyclic GNNs, nodes are typically updated sequentially following the partial order of the DAG, and the final target node representations are used as the graph embedding. Although these sequential models outperform undirected GNNs on DAG datasets, they still aggregate node neighborhood information similarly. This means they only encode the computation tree of the output nodes and not the exact structure of the DAG, resulting in limited expressivity.

A possible approach to overcome the problem of indistinguishable flow graphs is to use node indices or random features as input node features \citep{loukas2019graph, sato2021random}, which enables the model to uniquely identify each node. However, the resulting GNN model is no longer permutation invariant, which reduces its generalization capability. Similar problems arise for Transformer-based models \citep{vaswani2017attention} such as PACE \citep{dong2022pace}, which incorporate the relational inductive bias \citep{battaglia2018relational} via positional encodings. A different strategy would be to introduce Kirchhoff's first law through an additional physics-informed loss term \citep{donon2020neural}, which considerably increases the training complexity and is only useful if the target variable is the resource flow itself. Our approach enhances the expressivity of attention-based GNNs on flow graphs while preserving permutation invariance and computational efficiency.
\section{Preliminaries}
\label{prelim}

\textbf{Graph.} A directed graph can be defined as a tuple $\displaystyle \mathcal{G} = (\mathcal{V}, \mathcal{E})$ containing a set of nodes $\mathcal{V} \subset \mathbb{N}$ and a set of directed edges $\mathcal{E} \subseteq \mathcal{V} \times \mathcal{V}$. Thereby, we define $e = (u, v)$ as the directed edge from node $u$ to node $v$. An edge is called undirected if $(u, v) \in \mathcal{E}$ whenever $(v, u) \in \mathcal{E}$. Furthermore, we call the set $\mathcal{N}_{\text{in}} (v) = \{u \in \mathcal{V} \mid (u, v) \in \mathcal{E}\}$ the incoming neighborhood of $v$ and the set $\mathcal{N}_{\text{out}} (v) = \{u \in \mathcal{V} \mid (v, u) \in \mathcal{E}\}$ the outgoing neighborhood of $v$.

\textbf{Node Multiset.} For each node in a graph, the feature vectors of a set of incoming nodes can be represented as a multiset~\citep{xu2019powerful}. A multiset is a pair \((S, m)\), where \(S\) is a set of distinct elements (the node features) and \(m: S \rightarrow \mathbb{N}\) is the multiplicity of each element. We call two multisets $X_1 = (S, m_1),~X_2 = (S, m_2)$ equally distributed if $m_2 = k \cdot m_1$ with $k \in \mathbb{N}_{\geq 1}$.

\textbf{Directed Acyclic Graph.} A directed graph without cycles is called a directed acyclic graph (DAG). In the context of DAGs, we call the incoming neighborhood the predecessors of a node, and the outgoing neighborhood the successors of a node. The set of all ancestors of node $v$ contains all nodes $u \in \mathcal{V}$ such that $v$ is reachable from $u$. Similarly, the descendants are the nodes $u \in \mathcal{V}$ that are reachable from $v$. Finally, the set of nodes without predecessors is called the set of start or initial nodes, denoted by $\mathcal{I} \subset \mathcal{V}$, and the set of nodes without successors is called the set of end or final nodes, denoted by $\mathcal{F} \subset \mathcal{V}$.

\textbf{Computation Tree.} Let $\mathcal{D}=(\mathcal{V},\mathcal{E},r)$ be a rooted DAG with a unique final node called the root $r$. 
Its computation tree $\gamma(\mathcal{D})$ is obtained by leaving each node $v$ with at most one successor, and for any node $v$ with $n\ge 2$ successors, replacing $v$ by $n$ copies $v_1,\dots,v_n$, each connected to exactly one of $v$'s successors and inheriting all of $v$'s incoming edges. 
This procedure yields a rooted tree with the same root $r$. See App.~\ref{app:dag_tree} for a visualization.

\textbf{Flow Graph.} Let $\displaystyle \mathcal{G} = (\mathcal{V}, \mathcal{E})$ be a graph and $\mathcal{S}, \mathcal{T} \subseteq \mathcal{V}$ be the sources and targets of $\mathcal{G}$. A flow on $\mathcal{G}$ is a mapping $\psi: \mathcal{E} \rightarrow \mathbb{R}$ that satisfies Kirchhoff's first law:
\begin{align}
    \label{kirchhoff}
   \sum_{u \in \mathcal{N}_{\text{in}} (v)} \psi(u, v) = \sum_{u \in \mathcal{N}_{\text{out}} (v)} \psi(v, u)~~~\forall~v \in \mathcal{V} \setminus \{\mathcal{S}, \mathcal{T}\}.
\end{align}
If a graph is associated with a flow $\psi$ as defined above, we refer to it as a flow graph. In DAGs, the start nodes are sources, and the end nodes are targets: $\mathcal{I} \subseteq \mathcal{S}$ and $\mathcal{F} \subseteq \mathcal{T}$.

\textbf{Graph Neural Networks.} Graph Neural Networks (GNNs) transfer the concept of traditional neural networks to graph data. Thereby, the node representations $\{\bm{h}_i \in \mathbb{R}^{\rho} \mid i \in \mathcal{V}\}$ with the feature dimension $\rho$ are updated iteratively by aggregating information from neighboring nodes via message-passing. The updated node representations $\bm{h}_i^{\prime}$, i.e., the output of the network layer, are given by
\begin{align}
    \bm{h}_i^{\prime} = \phi \left( \bm{h}_i, \underset{j \in \mathcal{N}_{\text{in}} (i)}{\bigoplus} f \left( \bm{h}_j \right) \right),
\end{align}
with a learnable message function $f$, an aggregator $\oplus$, e.g., sum or mean, and an update function $\phi$. The choice of $\phi$, $\oplus$, and $f$ defines the design of a specific GNN model.

\textbf{Directed Acyclic Graph Neural Networks.} The main idea of GNNs for DAGs is to process and update the nodes sequentially according to the partial order defined by the DAG. Thereby, the update of a node representation $\bm{h}_i$ is computed based on the current-layer node representations of node $i$'s predecessors. The message-passing scheme of directed acyclic GNNs can therefore be expressed as 
\begin{align}
    \bm{h}_i^{\prime} = \phi \left( \bm{h}_i, \underset{j \in \mathcal{N}_{\text{in}} (i)}{\bigoplus} f \left( \bm{h}_j^{\prime} \right) \right).
\end{align}
The most widely used directed acyclic GNNs are D-VAE \citep{zhang2019dvae} and DAGNN \citep{thost2021directed},  the latter of which uses standard attention for aggregation. Both models utilize gated recurrent units (GRU) as the update function $\phi$ and are briefly explained in App.~\ref{app:dagnn}. As an alternative to sequential models, DAGs can be encoded using Transformer-based architectures, such as PACE \citep{dong2022pace} or DAGformer \citep{luo2023transformers}.
\section{Graph Attention and its Limitations}
\label{attention}

In this section, we demonstrate why standard graph attention is insufficient for flow graphs. First, we show that standard attention cannot distinguish node neighborhoods with equal distribution of node features, which generally limits its expressivity. Next, we prove that attention-based directed acyclic GNNs cannot discriminate between a DAG and its computation tree. As a result, they cannot distinguish non-isomorphic DAGs that exhibit the same computation tree, e.g., the example graphs from Fig.~\ref{fig:intro}.

\subsection{Attentional GNNs}

An attentional GNN uses a scoring function $e: \mathbb{R}^\rho \times \mathbb{R}^\rho \rightarrow \mathbb{R}$ to compute attention coefficients
\begin{align}
    e_{ij} = e \left( \bm{h}_i, \bm{h}_j \right),
\end{align}
indicating the importance of the features of node $j$ to node $i$. The computed attention coefficients $e_{ij}$ are normalized across all incoming neighboring nodes $j$ using softmax:
\begin{align}
    \displaystyle \alpha_{ij} = \text{softmax}_j(e_{ij}) = \dfrac{\text{exp}(e_{ij})}{\sum_{k\in\mathcal{N}_{\text{in}} (i)} \text{exp}(e_{ik})}.
\end{align}
Finally, the aggregation corresponds to a weighted average of the incoming messages:
\begin{equation}
    \label{eq:att}
    \displaystyle \bm{h}_{\text{att}, i}^{\prime} = \phi \left(\sum_{j \in \mathcal{N}_{\text{in}} (i)} \alpha_{ij} f \left( \bm{h}_{\text{att}, j} \right) \right).
\end{equation}
Popular attentional GNNs include GAT \citep{velivckovic2018graph}, GATv2 \citep{brody2021attentive}, and TransformerConv (TC) \footnote{Please note that TransformerConv is not a Graph Transformer with a global attention module (such as GraphGPS \citep{rampavsek2022recipe}) but a message-passing GNN with a local attention mechanism adopted from the vanilla Transformer \citep{vaswani2017attention}. It was originally introduced under the name \emph{Graph Transformer} as part of the UniMP model \citep{shi2020masked}.} \citep{shi2020masked}, which mainly differ in the choice of the scoring function $e$ (see App.~\ref{app:scoring}). The graph attention mechanism is visualized in Fig.~\hyperref[fig:models]{\ref*{fig:models}a}.

\begin{figure}[t]
\begin{center}
\includegraphics[width=0.9\textwidth]{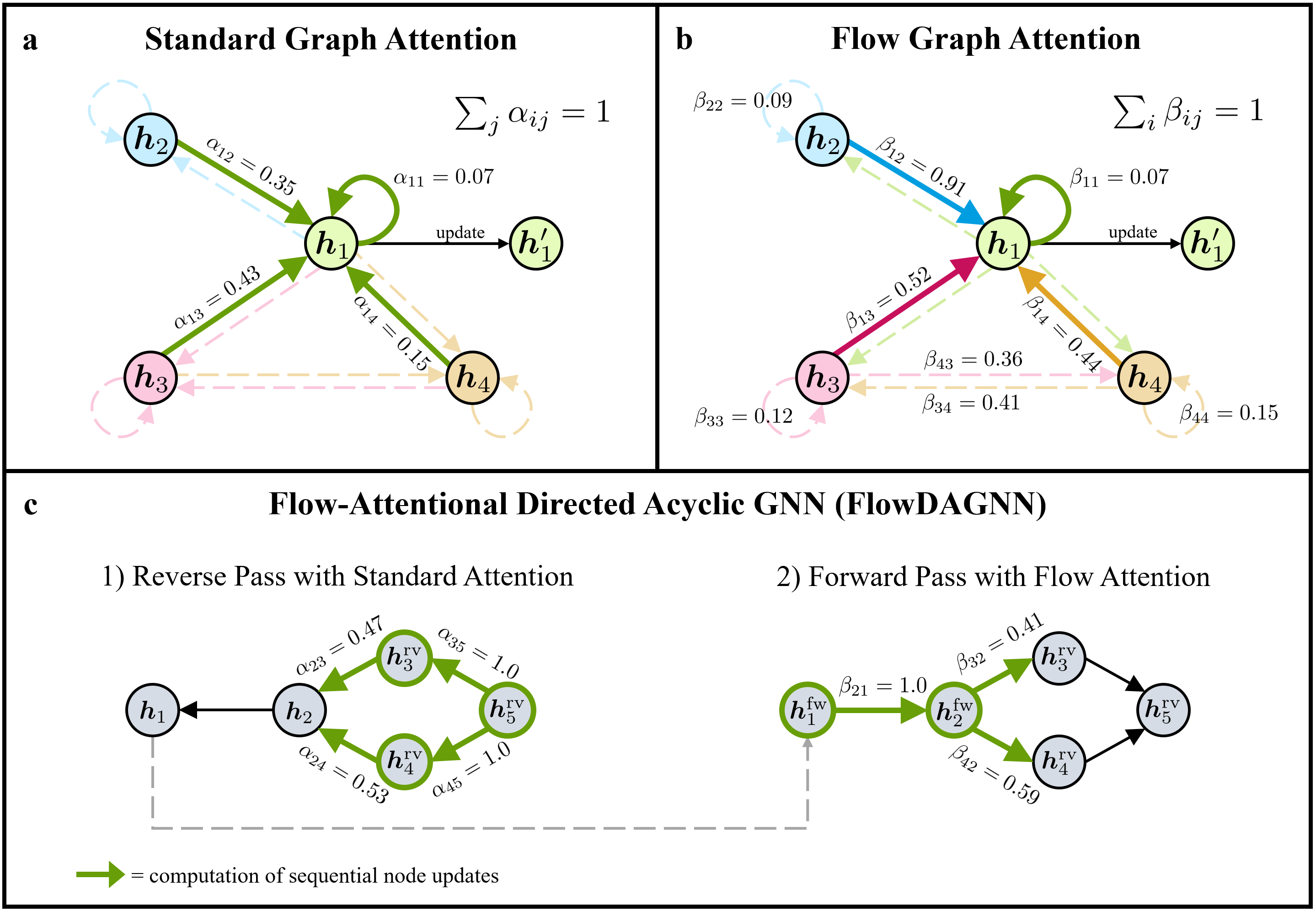}
\end{center}
\caption{\textbf{a} Standard graph attention mechanism as it is applied in attentional GNNs. The attention weights associated with edges of the same color sum to 1. \textbf{b} The proposed flow attention mechanism applied in FlowGNNs. The flow attention weights associated with edges of the same color sum to 1. \textbf{c} Two snapshots during the reverse and forward pass of FlowDAGNN. Nodes marked in green have already been updated.}
\label{fig:models}
\end{figure}

The weighted mean aggregation limits the expressivity of attention-based GNNs. Similar to the mean aggregator, standard attention does not capture the exact node neighborhood but the distribution of nodes in the neighborhood.
\begin{lemma} \label{lem:attention_expressivity}
Assume $\mathcal{X}_1 = (S, m)$ and $\mathcal{X}_2 = (S, k \cdot m)$ are multisets with the same distribution, with $k \in \mathbb{N}_{\geq 1}$. Then $\displaystyle \bm{h}_{\text{att}}^{\prime} (\mathcal{X}_1)$ = $\displaystyle \bm{h}_{\text{att}}^{\prime} (\mathcal{X}_2)$, for any choice of $\phi$ and $f$.
\end{lemma}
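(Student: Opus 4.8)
The plan is to exploit the fact that the attention mechanism in \eqref{eq:att} only ever evaluates the scoring function $e$ and the message function $f$ on individual node feature vectors, never on their multiplicities. Consequently, every copy of a given element $s \in S$ is assigned the same score $e(\bm{h}_i, s)$ and produces the same message $f(s)$, so the sum over the neighbourhood multiset can be rewritten as a sum over the \emph{distinct} elements of $S$ weighted by their multiplicities. The normalising softmax constant then becomes a weighted sum of exponentiated scores, with the multiplicities appearing as coefficients in both numerator and denominator — exactly the structure that makes a global rescaling of the multiplicities cancel.

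Concretely, I would first fix the query representation $\bm{h}_i$ (it is the same in both cases and plays no role beyond entering the scores), abbreviate $\varepsilon_s = \exp\!\big(e(\bm{h}_i, s)\big) > 0$ for $s \in S$, and observe that for a multiset $(S, m)$ the aggregated message inside $\phi$ in \eqref{eq:att} equals
\[
  \sum_{s \in S} \frac{m(s)\,\varepsilon_s}{\sum_{t \in S} m(t)\,\varepsilon_t}\, f(s),
\]
since the $m(s)$ identical copies of $s$ each contribute $\varepsilon_s / \sum_{t} m(t)\varepsilon_t$ to the softmax weight. The denominator is strictly positive because $m(t) \ge 1$ and $\varepsilon_t > 0$ for every $t \in S$, so the expression is well defined. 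Applying this to $\mathcal{X}_1 = (S, m)$ and $\mathcal{X}_2 = (S, k\, m)$, the factor $k$ multiplies every term of the numerator and every term of the denominator, hence cancels, and the two aggregated messages coincide. Feeding the common value into $\phi$ yields $\bm{h}_{\text{att}}^{\prime}(\mathcal{X}_1) = \bm{h}_{\text{att}}^{\prime}(\mathcal{X}_2)$, and since $\phi$ and $f$ were arbitrary the statement follows.

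There is no real obstacle here; the argument is essentially the observation that softmax-weighted averaging over a multiset is invariant under uniform rescaling of multiplicities. The only points requiring a line of care are verifying that the softmax denominator never vanishes (immediate from positivity of $m$ and of the exponential) and being explicit that the rewriting ``sum over copies $=$ multiplicity $\times$ value'' is legitimate because $e$ and $f$ depend on feature vectors only, i.e.\ the layer treats the incoming neighbourhood as a multiset in the sense of the Preliminaries. If one wants the statement phrased purely as a property of the layer viewed as a function of the multiset, one additionally notes that nothing in the derivation used any particular choice of $\bm{h}_i$, so the conclusion holds for every fixed center node.
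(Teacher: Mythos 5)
Your proposal is correct and follows essentially the same route as the paper's proof: rewrite the attention aggregation as a sum over the distinct elements of $S$ with multiplicities appearing in both the softmax numerator and denominator, and observe that the uniform factor $k$ cancels, making the output independent of the choice of $\phi$ and $f$. Your write-up is, if anything, slightly more careful than the paper's (explicit positivity of the denominator and the justification for collapsing identical copies), but the argument is the same.
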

Proofs of all Lemmas and Corollaries can be found in the Appendix~\ref{app:proofs}.

\subsection{Attention on DAGs}

The node update of an attentional directed acyclic GNN can be expressed as
\begin{align}
    \label{eq:att_dag}
    \bm{h}_{\text{att}, i}^{\prime} = \phi \left( \bm{h}_{\text{att}, i}, \sum_{j \in \mathcal{N}_{\text{in}} (i)} \alpha_{ij} f \left( \bm{h}_{\text{att}, j}^{\prime} \right) \right).
\end{align}
A directed acyclic GNN sequentially updates each node's feature vector until it arrives at one or more final nodes. If there are multiple final nodes, their representations can be gathered into a single virtual node. Since all other nodes are ancestors of this final node, its representation can be used to characterize the whole DAG. The computation history of the final node representation can be visualized as a rooted subtree, which means that if two non-isomorphic DAGs represent the same computation, the rooted subtree structures of their final node representations are equivalent. Therefore, since standard attention weights only depend on incoming neighborhoods, directed acyclic GNNs with standard attention mechanisms cannot discriminate a DAG $\mathcal{D}$ from its computation tree $\gamma(\mathcal{D})$ (see Fig.~\ref{fig:dag_tree} in App.~\ref{app:dag_tree}).
\begin{corollary}
    \label{cor:attention_dags}
    For every DAG $\mathcal{D} \in \Delta$ and every directed acyclic GNN $\mathcal{A}_{\text{att}}$ with a standard attention mechanism, it holds:
    \begin{align*}
        \mathcal{A}_{\text{att}}(\mathcal{D}) = \mathcal{A}_{\text{att}}(\gamma(\mathcal{D})).
    \end{align*}
\end{corollary}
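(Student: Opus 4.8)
The plan is to produce an explicit structure-preserving projection $\pi\colon \gamma(\mathcal{D})\to\mathcal{D}$ and then prove, by induction, that under any directed acyclic GNN with the standard-attention update rule~\eqref{eq:att_dag} the representation computed at a tree node $\tilde v$ always coincides with the representation computed at $\pi(\tilde v)$ in $\mathcal{D}$. Evaluating this identity at the root of $\gamma(\mathcal{D})$ will immediately give $\mathcal{A}_{\text{att}}(\mathcal{D})=\mathcal{A}_{\text{att}}(\gamma(\mathcal{D}))$, since the graph embedding of a directed acyclic GNN is a readout of the (possibly virtual) final-node representation.

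First I would fix a convenient combinatorial description of the computation tree: identify each node of $\gamma(\mathcal{D})$ with a directed path $P=(v_0\to v_1\to\cdots\to v_k=r)$ in $\mathcal{D}$ ending at the root, take the trivial path $(r)$ as the root of $\gamma(\mathcal{D})$, put an edge $P\to P'$ whenever $P=(v_0\to P')$, and set $\pi(P)=v_0$. This is precisely the iterated node-splitting construction of Section~\ref{prelim} (see App.~\ref{app:dag_tree}), and it is routine that it yields a rooted tree whose leaves lie over the initial nodes of $\mathcal{D}$. The structural fact I would isolate as a small lemma is the following: for every tree node $P$, the restriction of $\pi$ to $\mathcal{N}_{\text{in}}(P)$ is a bijection onto $\mathcal{N}_{\text{in}}(\pi(P))$. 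Indeed, the in-neighbors of $P=(v_0\to v_1\to\cdots\to r)$ are exactly the paths $(u\to v_0\to v_1\to\cdots\to r)$ with $u\in\mathcal{N}_{\text{in}}(v_0)$, so there is exactly one in-neighbor of $P$ over each in-neighbor of $\pi(P)$, with no change of multiplicities — splitting only ever redistributes successors and therefore never merges two predecessors or creates parallel edges between copies.

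Assuming the usual convention that a copy keeps the input features of the node it came from, so $\bm{h}^{(0)}_{\text{att},P}=\bm{h}^{(0)}_{\text{att},\pi(P)}$, I would then run a double induction — an outer induction on the layer index $\ell$ and, within a layer, an inner induction on the position of a node in a fixed topological order (the order in which the sequential scheme updates nodes) — to prove $\bm{h}^{(\ell)}_{\text{att},P}=\bm{h}^{(\ell)}_{\text{att},\pi(P)}$ for every tree node $P$. In the inductive step for a non-source node, the aggregated message $\sum_{Q\in\mathcal{N}_{\text{in}}(P)}\alpha_{PQ}\,f(\bm{h}^{(\ell)}_{\text{att},Q})$ is rewritten, via the bijection above, as a sum indexed by $\mathcal{N}_{\text{in}}(\pi(P))$: the factors $f(\bm{h}^{(\ell)}_{\text{att},Q})$ agree with those in $\mathcal{D}$ by the inner hypothesis, and the attention weights agree because each score $e_{PQ}$ depends only on representations that already agree (the outer hypothesis for $P$ and the inner one for $Q$) and the softmax denominator over $\mathcal{N}_{\text{in}}(P)$ sums exactly the same values as the one over $\mathcal{N}_{\text{in}}(\pi(P))$. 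Hence $\phi$ receives identical arguments in $\gamma(\mathcal{D})$ and $\mathcal{D}$ and the updated representations coincide; source nodes are handled in the base case because they remain sources under $\pi$ and keep their features. Finally, since $r$ has no successors, $\pi^{-1}(r)$ is the single root of $\gamma(\mathcal{D})$, so the final-node representation and any readout computed from it are identical, which is Corollary~\ref{cor:attention_dags}.

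The main obstacle is the structural lemma about $\pi$: one must argue carefully that the iterated splitting never alters the in-neighborhood multiset of any node, so that the softmax normalizations genuinely coincide; everything afterwards is a bookkeeping induction mirroring the proof of Lemma~\ref{lem:attention_expressivity}. A secondary point deserving attention is the order of the induction — layer outside, topological order inside — because the update in~\eqref{eq:att_dag} consumes the already-updated layer-$\ell$ representations of predecessors rather than their layer-$(\ell-1)$ values, so the inner induction is what licenses replacing $f(\bm{h}^{(\ell)}_{\text{att},Q})$ by $f(\bm{h}^{(\ell)}_{\text{att},\pi(Q)})$.
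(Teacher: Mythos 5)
Your proposal is correct and follows essentially the same route as the paper's own (much terser) argument: the aggregation in Eq.~\ref{eq:att_dag} only sees incoming neighborhoods, which the node-splitting construction of $\gamma(\mathcal{D})$ leaves unchanged, so each copy receives the same attention-weighted message and hence the same representation as its original, and the final-node readout coincides. Your explicit path-unfolding projection $\pi$, the in-neighborhood bijection, and the layer/topological-order double induction simply make rigorous what the paper states in one sentence, so there is nothing to object to.
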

As a consequence, an attentional directed acyclic GNN cannot distinguish between the two graphs in Fig.~\ref{fig:intro}, since they exhibit the same computation tree. Note that Corollary~\ref{cor:attention_dags} is also valid for directed acyclic GNNs with other aggregators that are independent of outgoing node neighborhoods, e.g., sum or mean.
\section{The Flow Attention Mechanism}
\label{flowattention}

In this section, we introduce the flow attention mechanism and discuss its influence on expressivity. Next, we define flow attention on DAGs, which enables the discrimination of a DAG from its computation tree. Finally, we propose FlowDAGNN, a directed acyclic GNN model for flow graphs.

\subsection{Flow-Attentional GNNs}
\label{flowgatconv}

Standard graph attention mechanisms normalize attention scores across all incoming edges. Therefore, a message does not depend on the number of nodes it is forwarded to and can be duplicated arbitrarily, contradicting the resource conservation concept inherent in flow graphs. Therefore, we propose an alternative graph attention mechanism that normalizes the attention scores across outgoing edges instead (see Fig.~\hyperref[fig:models]{\ref*{fig:models}b}). We denote the resulting \emph{flow attention weights} as $\beta_{ij}$ to distinguish them from the standard attention weights $\alpha_{ij}$:
\begin{align}
    \displaystyle \beta_{ij} = \text{softmax}_i(e_{ij}) = \dfrac{\text{exp}(e_{ij})}{\sum_{k\in\mathcal{N}_{\text{out}} (j)} \text{exp}(e_{kj})}.
\end{align}
Although the attention scores are normalized across outgoing edges, we still aggregate incoming messages in order to update the hidden state of node $i$:
\begin{align}
    \label{eq:flowatt}
\displaystyle \bm{h}_{\text{flow}, i}^{\prime} = \phi \left( \sum_{j \in \mathcal{N}_{\text{in}} (i)} \beta_{ij} f \left( \bm{h}_{\text{flow}, j} \right) \right).
\end{align}
However, since the messages are multiplied with the flow attention weights $\beta_{ij}$, they now also depend on the neighborhood of the message's sender, i.e., node $j$. In this way, we ensure that a message transmitted by any node cannot be duplicated arbitrarily but instead is distributed among all outgoing neighbors. We define a flow-attentional graph neural network (FlowGNN) as a modified version of an attentional GNN, which utilizes the flow attention mechanism from Eq.~\ref{eq:flowatt} for aggregating node neighborhood information instead of standard attention. Furthermore, we denote the corresponding FlowGNN versions of standard attentional GNNs as FlowGAT, FlowGATv2, FlowTC, etc.

The flow attention weights determine how a node's message is distributed among its outgoing neighbors, i.e., $\beta_{ij}$ can be seen as the relative flow from node $j$ to node $i$. An absolute flow $\psi_{\beta}$ can be calculated by iteratively multiplying subsequent flow attention weights in a graph.
\begin{lemma}
    \label{lem:absolute_flow}
    Let $\mathcal{G} = (\mathcal{V}, \mathcal{E})$ be a graph with fixed source nodes $\mathcal{S} \subset \mathcal{V}$ and target nodes $\mathcal{T} \subset \mathcal{V}$. We define $\psi_{\beta}$ for every directed edge $(j, i) \in \mathcal{E}$ as
    \begin{align*}
        \psi_{\beta} (j, i) = \begin{dcases}
            \beta_{ij} \sum_{k \in \mathcal{N}_{in}(j)} \psi_{\beta} (k, j),& \text{if } j \in \mathcal{V} \setminus \{\mathcal{S}, \mathcal{T}\}\\
            \psi_0 (j, i),& \text{otherwise}.
        \end{dcases}
    \end{align*}
    Thereby, $\psi_0 (j, i)$ is the absolute outgoing flow from a source or target node. Then $\psi_{\beta}$ is a flow on $\mathcal{G}$ that satisfies Kirchhoff's first law.
\end{lemma}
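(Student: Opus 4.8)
The plan is to verify Kirchhoff's first law \eqref{kirchhoff} directly from the recursive definition of $\psi_\beta$, after first checking that the recursion is well posed. The defining equation for $\psi_\beta(j,i)$ refers to $\psi_\beta(k,j)$ with $k$ ranging over the predecessors of $j$, so the recursion unwinds along reverse edges and bottoms out at the source and target nodes, where $\psi_0$ supplies the boundary values. In the DAG setting that is the relevant one here (and the one used for FlowDAGNN), termination is immediate: processing the nodes $j$ in a reverse topological order, each $\psi_\beta(j,i)$ is computed from quantities already defined, so it is a well-defined real number for every edge. For a general graph I would instead read the defining relations as a linear system $x = B x + c$ with $B = (\beta_{ij})$; since the $\beta$'s leaving any internal node sum to one, $B$ is substochastic, and under the (natural) assumption that every internal node can reach a target the Neumann series $\sum_k B^k c$ converges, giving a unique solution. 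I would state this reachability hypothesis explicitly as the condition under which the lemma holds.

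The core of the argument is then a one-line computation. Fix an internal node $v \in \mathcal{V} \setminus \{\mathcal{S}, \mathcal{T}\}$ and write its total incoming flow as $\Phi_v := \sum_{k \in \mathcal{N}_{\text{in}}(v)} \psi_\beta(k,v)$. For every successor $u \in \mathcal{N}_{\text{out}}(v)$, the edge $(v,u)$ has the internal node $v$ as its tail, so by definition $\psi_\beta(v,u) = \beta_{uv}\,\Phi_v$. Summing over all successors and pulling the common factor out of the sum,
\begin{align*}
\sum_{u \in \mathcal{N}_{\text{out}}(v)} \psi_\beta(v,u) = \Phi_v \sum_{u \in \mathcal{N}_{\text{out}}(v)} \beta_{uv}.
\end{align*}

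The key identity is that the flow attention weights leaving $v$ sum to one, $\sum_{u \in \mathcal{N}_{\text{out}}(v)} \beta_{uv} = 1$, which is immediate from $\beta_{uv} = \text{softmax}_u(e_{uv})$, since the softmax normalization is taken over exactly the set $\mathcal{N}_{\text{out}}(v)$. Substituting this back gives $\sum_{u \in \mathcal{N}_{\text{out}}(v)} \psi_\beta(v,u) = \Phi_v = \sum_{k \in \mathcal{N}_{\text{in}}(v)} \psi_\beta(k,v)$, which is precisely Eq.~\eqref{kirchhoff} at $v$. As $v$ was an arbitrary non-source, non-target node, $\psi_\beta$ is a flow on $\mathcal{G}$.

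I do not expect the Kirchhoff verification to be the hard part — it is just a telescoping of the softmax normalization. The only real subtlety, and the step I would be most careful about, is the well-definedness of $\psi_\beta$: making precise that the recursion terminates (clean induction over a topological order in the DAG case) or, if cyclic graphs are intended, that the associated linear system is solvable, for which the reachability-to-a-target assumption must be included and without which the ``absolute flow'' need not exist.
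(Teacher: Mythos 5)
Your proof is correct and takes essentially the same route as the paper's: both verify Kirchhoff's first law at an arbitrary internal node by factoring the common incoming-flow sum out of the outgoing edges and invoking the softmax normalization $\sum_{u \in \mathcal{N}_{\text{out}}(v)} \beta_{uv} = 1$. Your additional discussion of well-definedness of the recursion (topological order in the DAG case, the substochastic linear system with a reachability hypothesis for cyclic graphs) is a reasonable extra precaution that the paper's proof simply omits.
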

Since the absolute flow $\psi_{\beta}$ satisfies Kirchhoff's first law, a FlowGNN is capable of implicitly taking into account the underlying resource flow of a graph via the flow attention weights $\beta_{ij}$. Another advantage of the flow attention mechanism is that the aggregation over incoming neighbors is not a weighted mean but a weighted sum. Therefore, in contrast to standard attention, flow attention can discriminate between multisets with the same distribution.
\begin{lemma} \label{lem:flow_attention_distribution}
Assume $\mathcal{X}_1 = (S, m)$ and $\mathcal{X}_2 = (S, k \cdot m)$ are multisets with the same distribution of elements for some $k \in \mathbb{N}_{\geq 2}$. Furthermore, assume that all nodes $s \in S$ have the same outgoing neighborhood $\mathcal{N}_{\text{out}}(s)$ in $\mathcal{X}_1$ and $\mathcal{X}_2$. Then $\exists~ \phi, f$ such that $\displaystyle \bm{h}_{\text{flow}}^{\prime} (\mathcal{X}_1) \neq \displaystyle \bm{h}_{\text{flow}}^{\prime} (\mathcal{X}_2)$.
\end{lemma}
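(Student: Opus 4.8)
The plan is to realize $\mathcal{X}_1$ and $\mathcal{X}_2$ as the incoming neighborhood of a single node $i$, use the hypothesis to show that every incoming copy of a fixed feature $s\in S$ carries the same flow attention weight, so that the two aggregated messages in Eq.~\ref{eq:flowatt} differ only by the scalar factor $k$, and then pick the simplest possible $\phi$ and $f$ (the identity and a nonzero constant) to separate them.

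First I would fix a target node $i$ and, for $t\in\{1,2\}$, build a graph in which the multiset of input hidden states of the nodes in $\mathcal{N}_{\text{in}}(i)$ equals $\mathcal{X}_t$; concretely, the group of incoming nodes carrying feature $s$ has size $m(s)$ in the first graph and $k\,m(s)$ in the second. By hypothesis all nodes in that group share a common outgoing neighborhood, which we may denote $\mathcal{N}_{\text{out}}(s)$ and which necessarily contains $i$ (since each such node has an edge into $i$). The key observation is that the flow attention weight $\beta_{ij}=\exp(e(\bm{h}_i,\bm{h}_j))/\sum_{l\in\mathcal{N}_{\text{out}}(j)}\exp(e(\bm{h}_l,\bm{h}_j))$ depends on $j$ only through $\bm{h}_j$ and through the set $\mathcal{N}_{\text{out}}(j)$ together with the hidden states of the nodes in that set; hence every incoming copy $j$ of feature $s$ receives the same value $\beta_s:=\exp(e(\bm{h}_i,s))/\sum_{l\in\mathcal{N}_{\text{out}}(s)}\exp(e(\bm{h}_l,s))>0$. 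Consequently the aggregated message in Eq.~\ref{eq:flowatt} collapses to $\sum_{s\in S} m(s)\,\beta_s\, f(s)$ for $\mathcal{X}_1$ and to $k\sum_{s\in S} m(s)\,\beta_s\, f(s)$ for $\mathcal{X}_2$.

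Next I would choose $f\equiv c$ for some fixed $c\in\mathbb{R}^{\rho}\setminus\{\bm{0}\}$, which turns the aggregated message for $\mathcal{X}_1$ into $c\cdot a$ with $a:=\sum_{s\in S} m(s)\,\beta_s$. Since $S\neq\emptyset$ (the multisets are assumed distinct), every multiplicity $m(s)\ge 1$, and every $\beta_s>0$, the scalar $a$ is strictly positive, so the argument of $\phi$ is $c\,a\neq\bm{0}$ for $\mathcal{X}_1$ and $k\,c\,a$ for $\mathcal{X}_2$. Taking $\phi=\mathrm{id}$ (or indeed any map that separates the two distinct points $c\,a$ and $k\,c\,a$) then gives $\bm{h}_{\text{flow}}^{\prime}(\mathcal{X}_1)=c\,a\neq k\,c\,a=\bm{h}_{\text{flow}}^{\prime}(\mathcal{X}_2)$, where the inequality uses $k\ge 2$; this is the claim.

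The step I expect to require the most care is the reduction of the aggregation to $\sum_{s} m(s)\beta_s f(s)$, i.e. the claim that the outgoing-softmax produces exactly the same weight $\beta_s$ for every incoming copy of $s$. This is precisely where the hypothesis that all nodes with feature $s$ share the outgoing neighborhood $\mathcal{N}_{\text{out}}(s)$ is indispensable: one must check that the normalizing denominator, which ranges over those outgoing neighbors and involves their hidden states, is genuinely identical across copies, so that the clean factor-$k$ relation between the two aggregated messages holds. The remaining freedom in $\phi$ and $f$ is then exploited trivially, since we only need them to exist, not to be of any particular parametric form.
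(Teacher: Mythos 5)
Your proof follows essentially the same route as the paper's: both exploit that equal features and identical outgoing neighborhoods force the flow attention weights to coincide across the two multisets, so the aggregated messages differ exactly by the factor $k$, after which $\phi$ and $f$ are chosen to separate them. Your choice of a constant nonzero $f$ (with $\phi=\mathrm{id}$) is in fact slightly tidier than the paper's bare condition $f\neq 0$ with injective $\phi$, since it explicitly rules out cancellation in the weighted sum and guarantees the two arguments of $\phi$ are genuinely distinct.
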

Therefore, flow-attentional GNNs are particularly well-suited for tasks where not only statistical information but also the precise graph structure plays a crucial role. This is especially true when features occur repeatedly, e.g., in the case of multiple identical resistors in an electronic circuit.

\subsection{Flow-Attention on DAGs}
\label{flowattention_dags}

We define the node update for a flow-attentional directed acyclic GNN similar to Eq.~\ref{eq:att_dag}:
\begin{align}
    \bm{h}_{\text{flow}, i}^{\prime} = \phi \left( \bm{h}_{\text{flow}, i}, \sum_{j \in \mathcal{N}_{\text{in}} (i)} \beta_{ij} f \left( \bm{h}_{\text{flow}, j}^{\prime} \right) \right).
\end{align}
If all nodes in a graph have only one outgoing edge, e.g., the graph is a rooted tree, then $\beta_{ij} = 1~\forall~i,~j$. In this case, the flow attention mechanism degenerates to a sum aggregation, which can be maximally expressive for the right choice of $\phi$ and $f$ \citep{xu2019powerful}.
\begin{corollary}\label{cor:flow_max_exp_tree}
    Let $T \subset \Delta$ be the subset of all DAGs, where each node has at most one outgoing edge (i.e., the set of all rooted trees). Then, for any two rooted trees $\mathcal{T}_1$,  $\mathcal{T}_2$ with $\mathcal{T}_1 \neq \mathcal{T}_2$, there exists a flow-attentional directed acyclic GNN $\mathcal{A}_{\text{flow}}$ such that:
    \begin{align*}
        \mathcal{A}_{\text{flow}} (\mathcal{T}_1) \neq \mathcal{A}_{\text{flow}} (\mathcal{T}_2).
    \end{align*}
\end{corollary}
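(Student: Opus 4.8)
The plan is to exploit the degeneracy of flow attention on trees that is noted immediately before the statement. Since every node of a rooted tree has at most one successor (exactly one for every non-root node, none for the root), the normalization set $\mathcal{N}_{\text{out}}(j)$ appearing in the definition of $\beta_{ij}$ is a singleton whenever it is nonempty, so $\beta_{ij}=\exp(e_{ij})/\exp(e_{ij})=1$ for every edge $(j,i)$, regardless of the scoring function $e$. Consequently, on any rooted tree a single layer (one topological sweep) of a flow-attentional directed acyclic GNN reduces to the pure sum-aggregation update $\bm{h}_i' = \phi\!\big(\bm{h}_i,\ \sum_{j \in \mathcal{N}_{\text{in}}(i)} f(\bm{h}_j')\big)$, evaluated along the order that runs from the leaves (the initial nodes, which here are exactly the childless nodes) to the root (the unique final node), and the graph output $\mathcal{A}_{\text{flow}}(\mathcal{T})$ is obtained from the root representation via the (identity, or any injective) readout. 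So it suffices to exhibit $f$ and $\phi$ for which this bottom-up recursion computes an injective encoding of the isomorphism class of the rooted tree.

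Given the two fixed trees $\mathcal{T}_1 \neq \mathcal{T}_2$, put $N = |\mathcal{V}(\mathcal{T}_1)| + |\mathcal{V}(\mathcal{T}_2)|$, so every multiset of child representations arising in either computation has size at most $N$. I would then invoke the standard universal-multiset-function argument of \citet{xu2019powerful} (Lemma~5 / Corollary~6): over a countable feature domain there is a map $f$ such that $X \mapsto \sum_{x \in X} f(x)$ is injective on multisets of size at most $N$, and one can in addition pick $\phi$ so that $(\bm{h}_i, \{\!\{\bm{h}_j'\}\!\}_{j \in \mathrm{children}(i)}) \mapsto \phi\!\big(\bm{h}_i, \sum_j f(\bm{h}_j')\big)$ is injective. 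Initializing all leaves to a common input feature, an induction on tree height then shows that the representation produced at any node is an injective function of (the isomorphism type of) the subtree rooted at that node: the base case is a single leaf, and the inductive step combines injectivity of the update with the fact that the unordered subtree at a node is determined by the multiset of subtrees at its children. Evaluating at the root yields $\mathcal{A}_{\text{flow}}(\mathcal{T}_1) \neq \mathcal{A}_{\text{flow}}(\mathcal{T}_2)$ whenever $\mathcal{T}_1 \not\cong \mathcal{T}_2$.

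The hard part will be realizability: the clean argument treats $f$ and $\phi$ as arbitrary functions, whereas in a concrete FlowGNN/FlowDAGNN they are finite neural networks (and $\phi$ may be constrained to a GRU-style cell). To close this gap I would restrict attention to the finite set of feature vectors that actually occur in the two computations, represent them by sufficiently separated (e.g.\ one-hot) vectors, and use universal approximation to approximate the required $f$ and $\phi$ within the separation margin, so that injectivity on this finite set is preserved; if $\phi$ is fixed to a GRU, one additionally observes that a GRU with suitable weights can implement an injective, affine-like update on a bounded region, which is all that is needed. A secondary point to check carefully is the bookkeeping around the word ``tree'': the intended notion is the unordered rooted tree, so the relevant invariant at a node is the \emph{multiset} — not the sequence — of its children's encodings, which is precisely what sum aggregation records; and the DAG-GNN sweep does visit children before parents in this orientation, because leaves are the initial nodes and the root is the unique final node.
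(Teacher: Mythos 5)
Your proposal is correct and follows essentially the same route as the paper's proof: observe that on rooted trees every flow attention weight equals $1$ so the update degenerates to sum aggregation, then invoke Lemma~5 of \citet{xu2019powerful} to obtain $f$ and $\phi$ making the aggregation injective on multisets. You additionally spell out the induction on tree height and the neural-network realizability of $f$ and $\phi$, details the paper leaves implicit, but the underlying argument is the same.
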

Furthermore, contrary to standard attention, flow-attentional directed acyclic GNNs can distinguish between graphs and their message-passing computation trees.
\begin{theorem}
    \label{th:flow_dag}
    For every DAG $\mathcal{D} \in \Delta$ with $\mathcal{D} \notin \mathcal{T}$ (i.e., a true DAG) and its computation tree $\gamma(\mathcal{D})$, there exists a flow-attentional directed acyclic GNN $\mathcal{A}_{\text{flow}}$ such that:
    \begin{align*}
        \mathcal{A}_{\text{flow}}(\mathcal{D}) \neq \mathcal{A}_{\text{flow}}(\gamma(\mathcal{D})).
    \end{align*}
\end{theorem}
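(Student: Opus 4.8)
\textbf{Overall strategy.} The plan is to show that a flow-attentional directed acyclic GNN can detect the presence of a node with two or more successors --- precisely the node that gets split when forming the computation tree. Since $\mathcal{D}$ is a true DAG ($\mathcal{D} \notin \mathcal{T}$), there is at least one node $v$ with $|\mathcal{N}_{\text{out}}(v)| \ge 2$. In $\mathcal{D}$, the flow attention weights $\beta_{iv}$ for the outgoing edges of $v$ satisfy $\sum_{i \in \mathcal{N}_{\text{out}}(v)} \beta_{iv} = 1$ with each $\beta_{iv} < 1$, whereas in $\gamma(\mathcal{D})$ the node $v$ has been replaced by copies $v_1, \dots, v_n$, each with exactly one successor, so each corresponding flow attention weight equals $1$. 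The goal is to exploit this discrepancy: the message contributed by (the copies of) $v$ along the forward pass is scaled down in $\mathcal{D}$ but not in $\gamma(\mathcal{D})$, and with a suitable choice of $\phi$ and $f$ this difference propagates up to the root without being washed out.

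\textbf{Key steps.} First, I would fix a topological order and pick a node $v$ with $n = |\mathcal{N}_{\text{out}}(v)| \ge 2$ that is \emph{maximal} in the partial order among all such branching nodes (i.e., no descendant of $v$ branches); this is convenient but not essential. Second, I would choose the message function $f$ and update function $\phi$ to act like an injective tagging/accumulation scheme in the spirit of \citet{xu2019powerful}: arrange that each node's representation encodes (an injective hash of) the multiset of its predecessors' representations weighted by the incoming $\beta$'s, so that the whole computation history is recoverable. Concretely, one can work over a representation space large enough that $\phi$ composed with $f$ realizes an injective map on the relevant finite collection of weighted multisets arising in $\mathcal{D}$ and $\gamma(\mathcal{D})$. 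Third, I would track the representation of a fixed successor $i^\star \in \mathcal{N}_{\text{out}}(v)$: in $\mathcal{D}$ its update receives the term $\beta_{i^\star v} f(\bm{h}^\prime_{\text{flow}, v})$ with $\beta_{i^\star v} \le 1/n < 1$, while in $\gamma(\mathcal{D})$ it receives $1 \cdot f(\bm{h}^\prime_{\text{flow}, v_k})$ for the corresponding copy $v_k$, and $\bm{h}^\prime_{\text{flow}, v} = \bm{h}^\prime_{\text{flow}, v_k}$ since $v$ and its copies have identical incoming neighborhoods and identical predecessor representations (which follows by induction down to the sources, using that flow attention weights on edges into non-branching nodes are all $1$ in both graphs up to that point). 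Hence $\bm{h}^\prime_{\text{flow}, i^\star}$ differs between the two graphs. Fourth, I would propagate this difference: because $\phi \circ f$ is injective on the collection of weighted multisets in play, once two node representations differ at some layer, the representations of all their common descendants (eventually the root) differ as well, provided we use enough message-passing rounds to reach the root. I would therefore take the number of layers to be at least the length of the longest path in $\mathcal{D}$, so the root's representation is a function of the full history including $i^\star$. This yields $\mathcal{A}_{\text{flow}}(\mathcal{D}) \neq \mathcal{A}_{\text{flow}}(\gamma(\mathcal{D}))$.

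\textbf{Main obstacle.} The delicate point is the propagation-of-difference step: I must rule out a ``collision'' in which the scaled-down contribution of $v$ in $\mathcal{D}$ happens to be compensated by contributions from the \emph{other} copies or by the extra edges that $\gamma(\mathcal{D})$ introduces elsewhere, so that the root representations coincide despite the local discrepancy. Handling this cleanly requires a careful injectivity construction --- essentially exhibiting $\phi$ and $f$ for which the map from a DAG's (finite) message-passing trace to the root embedding is injective, analogous to the GIN argument but now with real-valued $\beta$ coefficients rather than integer multiplicities. I expect this to be the bulk of the work; the combinatorial observation that a true DAG has a branching node whose $\beta$'s are strictly below $1$ while the tree's are all exactly $1$ is the easy conceptual core, and Corollary~\ref{cor:flow_max_exp_tree} already supplies the template for the injective-aggregator construction on the tree part.
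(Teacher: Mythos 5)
Your core idea coincides with the paper's proof: since $\mathcal{D}$ is not a tree, some node has out-degree at least two, so in $\mathcal{D}$ some flow attention weight satisfies $\beta_{ij}<1$, whereas in $\gamma(\mathcal{D})$ every copy has a single successor and all weights equal $1$; a suitable $f$ together with an injective $\phi$ then forces distinct representations. Two remarks on where your sketch and the paper diverge. First, the ``main obstacle'' you defer — a global, GIN-style injective map on message-passing traces to rule out collisions — is resolved much more cheaply in the paper: it fixes a single node $i$ having a predecessor with $\beta_{ij}<1$ and merely requires $f$ to satisfy $f(\tilde{\bm{h}}^{\prime}_j) \ge f(\bm{h}^{\prime}_k)$ for all $j,k \in \mathcal{N}_{\text{in}}(i)$ (tree-side messages dominate DAG-side ones), which yields $\sum_{j} \beta_{ij} f(\bm{h}^{\prime}_j) < \sum_{j} f(\bm{h}^{\prime}_j) \le \sum_{j} f(\tilde{\bm{h}}^{\prime}_j)$ and hence, with $\phi$ injective, distinct representations at $i$; no injectivity over all weighted multisets is constructed (note the paper, like you, is terse about how the difference at $i$ survives to the graph-level readout, so your worry is legitimate but not resolved differently there). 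Second, a genuine slip in your sketch: your induction ``down to the sources'' showing $\bm{h}^{\prime}_{\text{flow},v} = \bm{h}^{\prime}_{\text{flow},v_k}$ requires that no \emph{ancestor} of $v$ branches, i.e., you should take $v$ \emph{minimal} among branching nodes in the partial order, not maximal; with a maximal choice, branching ancestors of $v$ are themselves split in $\gamma(\mathcal{D})$ (their copies have out-degree one while in $\mathcal{D}$ their weights are below one), so the predecessor representations of $v$ and of its copies need not coincide and the claimed equality fails as stated.
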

\begin{proof}
    We take an arbitrary but fixed intermediate node $i$ of the DAG and denote its representation under the flow-attentional directed acyclic GNN by
    \[
        \bm{h}_i \;=\; \bm{h}_{\text{flow}, i}^{\mathcal{D}}\quad \text{and}\quad \tilde{\bm{h}}_i \;=\; \bm{h}_{\text{flow}, i}^{\gamma(\mathcal{D})},
    \]
    for the graph $\mathcal{D}$ and its computation tree $\gamma(\mathcal{D})$, respectively.
    Since $\mathcal{D}$ is not a tree, there is at least one node $j$ with more than one successor. Hence, there is a predecessor $j$ for which $\beta_{ij} < 1$. We then need to show that $\bm{h}^{\prime}_i \neq \tilde{\bm{h}}^{\prime}_i$:
    \[
        \phi\!\Bigl(\bm{h}_i, \sum_{j \in \mathcal{N}_{\mathrm{in}, i}} \beta_{ij}\,f\bigl(\bm{h}^{\prime}_j\bigr)\Bigr)
        \;\neq\;
        \phi\!\Bigl(\tilde{\bm{h}}_i, \sum_{j \in \mathcal{N}_{\mathrm{in}, i}} f\bigl(\tilde{\bm{h}}^{\prime}_j\bigr)\Bigr).
    \]
    By choosing $f$ in the following way:
    \begin{align}\label{eq:proofhelple}
        \exists f: f(\tilde{\bm{h}}^{\prime}_j) \ge f(\bm{h}^{\prime}_k) \quad  \forall j,k \in \mathcal{N}_{\text{in}, i},
    \end{align}
    it follows due to $\exists j: \beta_{ij} < 1$:
    \begin{align*}
        \sum_{j \in \mathcal{N}_{\mathrm{in}, i}}\beta_{ij} f(\bm{h}^{\prime}_j) < \sum_{j \in \mathcal{N}_{\mathrm{in}, i}}f(\bm{h}^{\prime}_j) \overset{\text{Eq.~}\ref{eq:proofhelple}}{\le} \sum_{j \in \mathcal{N}_{\mathrm{in}, i}}{f(\tilde{\bm{h}}^{\prime}_j)}.
    \end{align*}
    Hence, with choosing $\phi$ to be injective, $\mathcal{A}_{\text{flow}}$ distinguishes $\mathcal{D}$ from $\gamma(\mathcal{D})$.
\end{proof}

From Theorem~\ref{th:flow_dag}, we conclude that a flow-attentional directed acyclic GNN can discriminate the example DAGs from Fig.~\ref{fig:intro} for the right choice of $\phi$ and $f$. In practice, we can model the composition $f^{(l)} \circ \phi^{(l-1)}$ on the $l$-th GNN layer by a universal approximator, e.g., GRU \citep{schafer2006recurrent, hoon2023minimal}.

\subsection{FlowDAGNN}
\label{sec:flowdagnn}

We propose a flow-attentional version of the attention-based DAGNN \citep{thost2021directed}. A naive approach would be to simply replace the attention weights $\alpha_{ij}$ with flow attention weights $\beta_{ij}$. 
Due to the sequential nature of directed acyclic GNNs, the computation of the flow attention weight $\beta_{ij}$ in a DAG only depends on the node $i$ and all its ancestors. However, in contrast to standard attention weights, flow attention weights are forward-directed, which means that they should also be conditioned on all descendants of the node $i$. Analogously, the electrical current splitting up from one node into multiple branches depends on the whole branch and not only on the first node in each branch. In App.~\ref{app:example}, we give a simple example of an electronic circuit illustrating this situation.

Hence, we construct a FlowDAGNN layer from two sublayers (see Fig.~\hyperref[fig:models]{\ref*{fig:models}c}). In the first sublayer (we call it the \emph{reverse pass}), we invert all edges of the DAG $\mathcal{D}$ and apply a standard DAGNN layer to the reverse DAG $\tilde{\mathcal{D}}$. This is equivalent to performing the aggregation over all successor nodes in the original DAG $\mathcal{D}$ instead of over all predecessors:
\begin{align}
    \bm{m}_i^{\text{rv}} &=  \sum_{j \in \mathcal{N}_{out} (i)} \alpha_{ij} \left( \bm{h}_i, \bm{h}_j^{\text{rv}} \right) \bm{h}_j^{\text{rv}},\\
    \alpha_{ij} \left( \bm{h}_i, \bm{h}_j^{\text{rv}} \right) &= \underset{j \in \mathcal{N}_{out} (i)}{\text{softmax}} \left( (\bm{w}_1^{\text{rv}})^{\text{T}} \bm{h}_i + (\bm{w}_2^{\text{rv}})^{\text{T}} \bm{h}_j^{\text{rv}} \right),\\
    \bm{h}_i^{\prime} &= \bm{h}_i^{\text{rv}} = \text{GRU}(\bm{h}_i, \bm{m}_i^{\text{rv}}).
\end{align}
In the second sublayer, we perform a \emph{forward pass} on the original DAG $\mathcal{G}$. However, this time we are applying the flow attention mechanism described in Section~\ref{flowgatconv} to compute flow attention weights $\beta_{ij}$:
\begin{align}
    \bm{m}_i^{\text{fw}} &= \sum_{j \in \mathcal{N}_{in} (i)} \beta_{ij} \left( \bm{h}_i^{\text{rv}}, \bm{h}_j^{\text{fw}} \right) \bm{h}_j^{\text{fw}},\\
    \beta_{ij} \left( \bm{h}_i^{\text{rv}}, \bm{h}_j^{\text{fw}} \right) &= \underset{i \in \mathcal{N}_{out} (j)}{\text{softmax}} \left( (\bm{w}_1^{\text{fw}})^{\text{T}} \bm{h}_i^{\text{rv}} + (\bm{w}_2^{\text{fw}})^{\text{T}} \bm{h}_j^{\text{fw}} \right),\\
    \bm{h}_i^{\text{fw}} &= \text{GRU}(\bm{h}_i^{\text{rv}}, \bm{m}_i^{\text{fw}}).
\end{align}
Since the hidden states $\bm{h}_i^{\text{rv}}$ of the reverse pass contain information about all descendants of the node $i$, and the hidden states $\bm{h}_j^{\text{fw}}$ contain information about all ancestors of the node $j$, the computation of the flow attention weights $\beta_{ij}$ essentially takes into account information about all nodes of the graph that are connected to the node $i$.

After $L$ FlowDAGNN layers, we compute the graph-level representation from both the reverse pass representations of the start nodes as well as the forward pass representations of the end nodes and concatenate across layers:
\begin{align}
    \bm{h}_{\mathcal{G}} = \underset{i \in \mathcal{I}}{\text{Max-Pool}} \left( \concat_{l=0}^{L} \bm{h}_{i}^{\text{rv},l} \right) \concat \underset{j \in \mathcal{F}}{\text{Max-Pool}} \left( \concat_{l=0}^{L} \bm{h}_{j}^{\text{fw},l} \right). 
\end{align}

\subsection{Applicability Beyond Flow Graphs} \label{beyondflow}
The flow attention mechanism can be integrated into any GNN with a local attention mechanism. Thus, flow-attentional GNNs can technically be applied to the same graph datasets as their base models, which means their applicability is not necessarily limited to flow graphs. However, it might be less effective on other types of graph datasets.\\
In flow attention, attention scores are normalized across outgoing neighbors, guiding a model to learn how to distribute a node’s message among them. This is especially useful on flow graphs governed by Kirchhoff’s first law, such as power grids, electronic circuits, or traffic networks. In contrast, other types of graph datasets, such as citation graphs or social networks, do not have an inherent structure that requires an information distribution across outgoing nodes. For example, there may be nodes that broadcast information to many adjacent nodes, e.g., influencers in social networks~\citep{chen2009efficient}.\\
Furthermore, as pointed out in the Introduction, some non-isomorphic graph structures might be equivalent in the context of informational graphs (e.g., computational graphs) and therefore should be mapped to the same representation, a consistency not guaranteed by a flow-attentional GNN due to Theorem~\ref{th:flow_dag}. We therefore expect FlowGNNs to be less effective on these types of graph datasets. \\
On the other hand, the flow-attention mechanism has the advantage over standard attention that it can distinguish between node multisets with the same distribution (Lemma~\ref{lem:flow_attention_distribution}). This could lead to performance improvements on graphs where certain node features occur repeatedly, which is often the case in flow graph applications (e.g., multiple identical components in electronic circuits), but can also be the case outside of the flow graph context (e.g., multiple identical atoms in molecules).\\
In this study, we focus on evaluating the effectiveness of the flow attention mechanism on flow graph datasets, including power grids and electronic circuits. However, additional experimental results on standard graph datasets can be found in App.~\ref{app:standard}.
\section{Experiments}
\label{experiments}

We perform two different experiments. First, we perform graph-level multiclass classification on undirected flow graphs, comparing the effectiveness of our flow attention mechanism against standard graph attention. In the second experiment, we perform graph regression on DAGs to compare our proposed FlowDAGNN model with relevant directed acyclic GNN baselines.

\subsection{Graph Classification on Undirected Flow Graphs}\label{powergrids}

\textbf{Dataset.} We use the publicly available power grid data from the PowerGraph benchmark dataset \citep{varbella2024powergraph}, which encompasses the IEEE24, IEEE39, IEEE118, and UK transmission systems. The graphs contained in these datasets are undirected and cyclic and represent test power systems mirroring real-world power grids. The test systems differ in scale and topology, covering various relevant parameters. Further details can be found in App.~\ref{app:data}.

\textbf{Task.} We perform cascading failure analysis as a graph-level multiclass classification task. Thereby, we utilize the attributed graphs provided by the PowerGraph dataset, each representing unique pre-outage operating conditions along with a set of outages corresponding to the removal of a single or multiple branches. An outage may result in demand not served (DNS) by the grid, and a cascading failure may occur, meaning that one or more additional branches trip after the initial outage. The model is supposed to predict whether the grid is stable (DNS = 0 MW) or unstable (DNS $>$ 0 MW) after the outage, and additionally, whether a cascading failure occurs, resulting in four distinct categories representing the possible combinations of stable/unstable and cascading failure yes/no.

\textbf{Models and Baselines.} We take three widely used attention-based GNNs (GAT, GATv2, and TransformerConv (TC)) and compare them against the corresponding flow-attentional variants FlowGAT, FlowGATv2, and FlowTC. Additionally, we compare against three popular non-attentional GNN baselines (GCN, GraphSAGE, and GIN). For each model, we perform a hyperparameter optimization by varying the number of message-passing layers (1, 2, 3) and the hidden dimension (8, 16, 32).  Between subsequent message-passing layers, we apply the ReLU activation function followed by a dropout of 10\%. To obtain graph embeddings from the node embeddings, we apply a global maximum pooling operator. For the final prediction, we use a single linear layer or a two-layer perceptron with a LeakyReLU activation, depending on which type of prediction layer was used for the corresponding model in the original PowerGraph benchmark.\\
The message-passing GNNs compared in this experiment are important components of many recent Graph Transformer models \cite{shehzad2024graph}, which combine local message-passing with global attention. In App.~\ref{app:transformers}, we report additional experimental results with recent state-of-the-art Graph Transformer models: SAT \citep{chen2022structure}, GraphGPS \citep{rampavsek2022recipe}, and Exphormer \citep{shirzad2023exphormer}. We will investigate the integration of flow-attentional GNNs into Graph Transformer models in future work.

\textbf{Experimental Setting.} We stick closely to the original benchmark setting in \citet{varbella2024powergraph} by splitting the datasets into train/validation/test with ratios 85/5/10\% and using the Adam optimizer \citep{kingma2014adam} with an initial learning rate of $10^{-3}$ as well as a scheduler that reduces the learning rate by a factor of five if the validation accuracy plateaus for ten epochs. The negative log-likelihood is used as the loss function, and balanced accuracy \citep{brodersen2010balanced} is used as the primary evaluation metric due to the strong class imbalance (see App.~\ref{app:data}). We train all models with a batch size of 16 for a maximum number of 500 epochs and apply early stopping with a patience of 20 epochs. Each training run is repeated five times with different random seeds.

\begin{table}[t]
\caption{\textbf{Test set balanced accuracy ($\%, \uparrow$)} and \textbf{F1-score (macro-averaged, $\%, \uparrow$)} for the cascading failure analysis (multiclass classification) on four different power grid test systems from the PowerGraph dataset. Reported results represent the average over five training runs with different random seeds, along with the standard deviation. The best result for each test system is marked in bold.}
\label{tab:cfa_multi}
\begin{center}
\resizebox{\textwidth}{!}{
\begin{tabular}{@{}lcccccccc@{}}
&\multicolumn{2}{c}{\bf IEEE24} &\multicolumn{2}{c}{\bf IEEE39} &\multicolumn{2}{c}{\bf IEEE118} &\multicolumn{2}{c}{\bf UK}\\ 
\hline
\rule{0pt}{2.5ex}\textbf{MODEL} & Bal. Acc. ($\uparrow$) & Macro-F1 ($\uparrow$) & Bal. Acc. ($\uparrow$) & Macro-F1 ($\uparrow$) & Bal. Acc. ($\uparrow$) & Macro-F1 ($\uparrow$) & Bal. Acc. ($\uparrow$) & Macro-F1 ($\uparrow$) \\
\hline \hline
\rule{0pt}{2.5ex}GCN & 89.4 $\pm$ 1.0 & 89.7 $\pm$ 1.0 & 81.5 $\pm$ 5.1 & 81.9 $\pm$ 5.1 & 78.2 $\pm$ 6.0 & 77.7 $\pm$ 6.7 & 88.4 $\pm$ 1.2 & 86.3 $\pm$ 1.6\\
GraphSAGE & 95.6 $\pm$ 0.2 & 95.4 $\pm$ 0.4 & 88.7 $\pm$ 4.6 & 89.1 $\pm$ 4.5 & 98.7 $\pm$ 0.1 & 98.4 $\pm$ 0.1 & 95.2 $\pm$ 0.3 & 95.2 $\pm$ 0.3\\
GIN & 98.0  $\pm$ 0.8 & 97.3 $\pm$ 0.9 & 95.2 $\pm$ 1.7 & 94.6 $\pm$ 1.4 & 96.5 $\pm$ 2.5 & 96.3 $\pm$ 2.5 & \textbf{97.1 $\pm$ 0.2} & 96.4 $\pm$ 0.2\\
\hline
\rule{0pt}{2.5ex}GAT & 90.8 $\pm$ 4.1 & 90.6 $\pm$ 3.6 & 90.9 $\pm$ 2.7 & 90.0 $\pm$ 2.7 & 92.1 $\pm$ 1.5 & 91.2 $\pm$ 1.5 & 94.9 $\pm$ 1.0 & 94.3 $\pm$ 1.3\\
\textbf{FlowGAT} & 93.3 $\pm$ 1.1 & 93.0 $\pm$ 1.0 & 94.1 $\pm$ 1.3 & 93.2 $\pm$ 1.3 & 96.2 $\pm$ 2.7 & 96.1 $\pm$ 2.6 & 94.9 $\pm$ 0.4 & 94.3 $\pm$ 0.4\\
\hline
\rule{0pt}{2.5ex}GATv2 & 91.9 $\pm$ 4.0 & 90.0 $\pm$ 4.7 & 87.8 $\pm$ 2.0 & 86.3 $\pm$ 2.2 & 92.6 $\pm$ 1.6 & 91.7 $\pm$ 1.3 & 95.3 $\pm$ 0.7 & 94.8 $\pm$ 1.0\\
\textbf{FlowGATv2} & 96.1 $\pm$ 0.9 & 95.6 $\pm$ 0.8 & 95.9 $\pm$ 0.7 & \textbf{95.0 $\pm$ 0.9} & \textbf{99.1 $\pm$ 0.1} & \textbf{99.0 $\pm$ 0.1} & 96.9 $\pm$ 0.3 & 96.4 $\pm$ 0.4\\
\hline
\rule{0pt}{2.5ex}TC & 97.3 $\pm$ 0.5 & 96.7 $\pm$ 0.6 & 90.7 $\pm$ 2.3 & 90.8 $\pm$ 2.0 & 98.7 $\pm$ 0.1 & 98.4 $\pm$ 0.1 & 96.5 $\pm$ 0.3 & 96.2 $\pm$ 0.2\\
\textbf{FlowTC} & \textbf{98.7 $\pm$ 0.3} & \textbf{98.1 $\pm$ 0.3} & \textbf{96.0 $\pm$ 0.6} & 94.4 $\pm$ 0.6 & 98.8 $\pm$ 0.2 & 98.6 $\pm$ 0.2 & \textbf{97.1 $\pm$ 0.4} & \textbf{96.8 $\pm$ 0.5}\\
\end{tabular}
}
\end{center}
\end{table}

\textbf{Results.} The balanced accuracies and macro-F1 scores on the test set are reported for each model on each of the four test systems in Tab.~\ref{tab:cfa_multi}. We only report the results for the best model architecture from the hyperparameter optimization. Thereby, we noticed that the accuracy mostly improves with more message-passing layers, which has already been observed for power grid data in \citet{ringsquandl2021power}. The FlowGNNs perform better than their corresponding standard GNN version in most cases. FlowGAT shows a higher balanced accuracy compared to GAT for the test systems IEEE24, IEEE39, and IEEE118, and a comparable performance on the UK test system. In the case of GATv2, the FlowGNN version even outperforms its standard counterpart on all test systems. FlowTC performs better than TC on all test systems except for IEEE118, where it shows a comparable performance. On all test systems, the best-performing model among the tested ones is a flow-attentional GNN. Overall, these results indicate that the flow attention mechanism, which is the only applied change to the corresponding baselines, can enhance the performance of attention-based GNNs on undirected flow graph data.

\subsection{Graph Regression on Directed Acyclic Flow Graphs}
\label{opamps}

\textbf{Dataset.} We utilize the Ckt-Bench101 dataset from the publicly available Open Circuit Benchmark (OCB) \citep{dong2023cktgnn}, which was developed to evaluate methods for electronic design automation. The dataset contains 10,000 operational amplifiers (Op-Amps) represented as DAGs and provides circuit specifications (e.g., gain and bandwidth) obtained from simulations. Details can be found in App.~\ref{app:data}.

\textbf{Task.} We perform graph-level regression to predict the properties of the Op-Amps. For this purpose, we train three separate instances of each model for the prediction of gain, bandwidth, and figure of merit (FoM), respectively. The FoM is a measure of the circuit's overall performance and depends on gain, bandwidth, and phase margin.

\textbf{Models and Baselines.} We compare our proposed model, FlowDAGNN, against widely used baseline models from the literature, including GNN- and Transformer-based models tailored to DAGs: D-VAE, DAGNN, DAGformer (building upon the Structure-Aware Transformer (SAT, \citet{chen2022structure})), and PACE. Thereby, we use the default parameters from \citet{dong2023cktgnn} where applicable, and from the original publications elsewhere, as well as the model-specific readout layers. For FlowDAGNN, we use two layers as described in Sec.~\ref{sec:flowdagnn} (each comprising one reverse and one forward pass) and adopt all other model parameters from DAGNN. The final prediction is done using a two-layer perceptron with a ReLU activation in between. Right before these final layers, we apply a dropout of 50\% for regularization. 

\textbf{Experimental setting.} We split the dataset into train/validation/test with ratios 80/10/10\% and select the same test set as in \citet{dong2023cktgnn}. Furthermore, we use the AdamW optimizer \citep{loshchilov2017decoupled} with an initial learning rate of $10^{-4}$ and train each model using the mean squared error (MSE) as the loss function with a batch size of 64 for a maximum of 500 epochs, but apply early stopping with a patience of 20 epochs. Each training run is repeated ten times with different random seeds.

\begin{table}[t]
\caption{\textbf{Test set RMSE} and \textbf{Pearson's R ($\%$)} for the prediction of three different Op-Amp properties from the Ckt-Bench101 dataset. Reported results represent the average over ten training runs with different random seeds, along with the standard deviation. The best result for each property is marked in bold.}
\label{tab:opamps}
\begin{center}
\resizebox{\textwidth}{!}{
\begin{tabular}{@{}lcccccccc@{}}
&\multicolumn{2}{c}{\bf GAIN} &\multicolumn{2}{c}{\bf BANDWIDTH} &\multicolumn{2}{c}{\bf FoM}\\
\hline
\rule{0pt}{2.5ex}\textbf{MODEL} & RMSE ($\downarrow$) & Pearson's R ($\uparrow$) & RMSE ($\downarrow$) & Pearson's R ($\uparrow$) & RMSE ($\downarrow$) & Pearson's R ($\uparrow$) \\
\hline \hline
\rule{0pt}{2.5ex}PACE & 0.253 $\pm$ 0.009 &  97.1 $\pm$ 0.3 & 0.443 $\pm$ 0.014 & 90.9 $\pm$ 0.5  & 0.443 $\pm$ 0.009 & 90.8 $\pm$ 0.5\\
DAGformer (SAT) & 0.234 $\pm$ 0.012 & 97.2 $\pm$ 0.3 & 0.459 $\pm$ 0.010 & 89.2 $\pm$ 0.4 & 0.450 $\pm$ 0.015 & 89.6 $\pm$ 0.7 \\
D-VAE & 0.229 $\pm$ 0.004 & 97.3 $\pm$ 0.1 & 0.430 $\pm$ 0.008 & 90.6 $\pm$ 0.3 & 0.421 $\pm$ 0.011 & 91.0 $\pm$ 0.4\\
DAGNN & 0.215 $\pm$ 0.002 & 97.6 $\pm$ 0.0 & 0.396 $\pm$ 0.008 & 92.1 $\pm$ 0.3 & 0.396 $\pm$ 0.005 & 92.0 $\pm$ 0.2\\
\textbf{FlowDAGNN} & \textbf{0.209 $\pm$ 0.007} & \textbf{97.8 $\pm$ 0.1} & \textbf{0.371 $\pm$ 0.008} & \textbf{93.1 $\pm$ 0.3} & \textbf{0.366 $\pm$ 0.008} & \textbf{93.3 $\pm$ 0.3}\\
\end{tabular}
}
\end{center}
\end{table}

\textbf{Results.} The RMSEs on the test set for all models and all OpAmp target properties are presented in Tab.~\ref{tab:opamps}. Among all tested models, FlowDAGNN shows the best performance on all target properties. Especially, it performs better than DAGNN, the standard attentional model on which it is originally based. Thereby, FlowDAGNN performs slightly better in predicting the gain property and significantly better in the other two properties.

\section{Conclusion}
\label{conclusion} 

In this paper, we proposed the flow attention mechanism, which adapts existing standard graph attention mechanisms to be more suitable for learning tasks on flow graph datasets, where the flow of physical resources (e.g., electricity) plays an important role. Inspired by Kirchhoff's first law, the mechanism normalizes attention scores across outgoing edges instead of incoming ones, which avoids unrestricted message duplication and better captures the underlying physical flow of the graph. We discussed the influence of this architectural change on the model expressivity and showed that flow-attentional GNNs, in contrast to GNNs using standard attention, can distinguish node neighborhoods with the same distribution. Since the proposed modification of the standard graph attention is simple and minimal, it can be easily implemented in practice and does not significantly increase the computational effort (see App.~\ref{app:eff} for more details on computational efficiency).

Since many flow graphs can be naturally expressed as directed acyclic graphs (DAGs), we also extended the flow attention mechanism to DAGs and proposed a specific model, namely FlowDAGNN. We proved that this model can distinguish non-isomorphic directed acyclic graphs that were so far indistinguishable for existing GNNs tailored to DAGs. We validated our theoretical findings with extensive experiments on power grids and electronic circuit datasets, including undirected graphs and DAGs, respectively. Our results indicate that the flow attention mechanism considerably improves the performance of their standard counterparts on graph-level regression and classification tasks. 

In the future, we want to analyze how the proposed models scale to larger circuits and power grids. Another interesting direction will be to evaluate the performance on node- and edge-level tasks, as well as on other flow graph data, such as traffic networks or supply chains. Finally, we also want to investigate the usage of flow-attentional GNNs as local message-passing components in Graph Transformer models (see also App.~\ref{app:transformers}).
\subsubsection*{Acknowledgments}
This work was funded by the German Federal Ministry of Research, Technology and Space (16ME0877). We also acknowledge the helpful feedback from Mohamed Hassouna, Clara Holzhüter, Lukas Rauch, and Jan Schneegans.

\bibliography{main}
\bibliographystyle{tmlr}

\appendix
\newpage
\section{Proofs} \label{app:proofs}

\begin{proof}[Proof of Lemma~\ref{lem:attention_expressivity}]
        We are given the node update of an attentional GNN from Equation~\ref{eq:att}, adapted to the multiset $\mathcal{X} = (S,m)$ as input:
\begin{align*}
    \displaystyle \bm{h}^{\prime}_{\text{att}} = \phi \left( \sum_{s \in S} \alpha_{1s} m(s) f \left( \bm{h}_{\text{att}, s} \right) \right),
\end{align*}
    with $\alpha_{1s}$ being the softmax over the edge importance scores:
    \begin{align*}
        \alpha_{1s} = \frac{\exp (e_{1s})}{\sum_{s^{\prime} \in S} \exp (e_{1s^{\prime}})}.
    \end{align*}
For the two given multisets $\mathcal{X}_1, \mathcal{X}_2$, $\bm{h}_{\text{att}, i}^{\prime}$ gets the same result for any choice of $\phi$ and $f$:
    \begin{align*}
        \bm{h}^{\prime}_{\text{att}}(\mathcal{X}_1) &= \phi \left( \frac{\sum_{s \in S} m(s) \exp(e_{1s}) f ( \bm{h}_{\text{att}, s} )}{\sum_{s^{\prime}\in S}m(s^{\prime})\exp (e_{1s^{\prime}})} \right)\\
        &= \phi \left( \frac{\sum_{s \in S} k \cdot m(s)s\exp(e_{1s}) f ( \bm{h}_{\text{att}, s} )}{\sum_{s^{\prime}\in S}k \cdot m(s^{\prime})\exp (e_{1s^{\prime}})} \right)\\
        &=\bm{h}^{\prime}_{\text{att}}(\mathcal{X}_2).
    \end{align*}
\end{proof}

\begin{proof}[Proof of Corollary~\ref{cor:attention_dags}]
    This follows from the definition of a computation tree: Each node in $\gamma(\mathcal{D})$ gets the same representation as the corresponding node in $\mathcal{D}$, as the aggregation is carried out over the same multiset of node features and does not take into account the outgoing neighborhoods of the nodes contained in the multiset. Hence, the representation of the (virtual) output node is the same in both cases, leading to equal DAG representations.
\end{proof}

\begin{proof}[Proof of Lemma~\ref{lem:absolute_flow}]
    We recall the definition of $\psi_{\beta}$ from Lemma~\ref{lem:absolute_flow}:
    \begin{align}
        \label{eq:def_abs_flow}
        \psi_{\beta} (j, i) = \begin{dcases}
            \beta_{ij} \sum_{k \in \mathcal{N}_{in}(j)} \psi_{\beta} (k, j),& \text{if } j \in \mathcal{V} \setminus \{\mathcal{S}, \mathcal{T}\}\\
            \psi_0 (j, i),& \text{otherwise}.
        \end{dcases}
    \end{align}
    Since the flow attention weights correspond to the attention scores normalized across outgoing edges, it holds that
    \begin{align*}
        \label{eq:beta_sum_1}
        \sum_{i \in \mathcal{N}_{out}(j)} \beta_{ij} = 1.
    \end{align*}
    Multiplying on both sides with the sum of $\Psi_{\beta}$ across all incoming edges of node $j$ gives:
    \begin{align*}
        \sum_{i \in \mathcal{N}_{out}(j)} \beta_{ij} \left( \sum_{k \in \mathcal{N}_{in}(j)} \psi_{\beta} (k, j) \right) = \sum_{k \in \mathcal{N}_{in}(j)} \psi_{\beta} (k, j).
    \end{align*}
    Using the definition of $\psi_{\beta}$ from Eq.~\ref{eq:def_abs_flow}, we arrive at Kirchhoff's first law:
    \begin{align*}
    \sum_{i \in \mathcal{N}_{out}(j)} \psi_{\beta} (j, i) = \sum_{k \in \mathcal{N}_{in}(j)} \psi_{\beta} (k, j)~~~\forall~j \in \mathcal{V} \setminus \{\mathcal{S}, \mathcal{T}\}.
    \end{align*}
\end{proof}

\begin{proof}[Proof of Lemma~\ref{lem:flow_attention_distribution}]
        We are given the node update of a flow-attentional GNN from Equation~\ref{eq:flowatt}, adapted to the multiset $\mathcal{X} = (S,m)$ as input:
\begin{align*}
    \displaystyle \bm{h}^{\prime}_{\text{flow}} = \phi \left( \sum_{s \in S} \beta_{1s} m(s) f \left( \bm{h}_{\text{flow}, s} \right) \right).
\end{align*} 
Since the elements of $S$ have the same features and outgoing neighborhoods in $\mathcal{X}_1$ and $\mathcal{X}_2$, the flow attention weights are the same in both cases.
    Therefore, if $\phi$ is injective, $\bm{h}_{\text{flow}, i}^{\prime}$ gets different results:
    \begin{align*}
        \bm{h}^{\prime}_{\text{flow}}(\mathcal{X}_1) &= \phi \left( \sum_{s \in X_1} \beta_{1s} m(s) f ( \bm{h}_{\text{flow},s} ) \right)
        \\
        \overset{k \ge 2, f \neq 0}&{\neq} \phi \left( \sum_{s \in X_1} k \beta_{1s} m(s) f ( \bm{h}_{\text{flow}, s} ) \right)
        \\
        &= \bm{h}^{\prime}_{\text{flow}}(\mathcal{X}_2).
    \end{align*}
\end{proof}

\begin{proof}[Proof of Corollary~\ref{cor:flow_max_exp_tree}]
        We are given the node update of a flow-attentional GNN from Equation~\ref{eq:flowatt}, adapted to the multisets $\mathcal{X}_i = (S_i,m_i), i \in \{1,2\}$ as input:
\begin{align*}
    \displaystyle \bm{h}^{\prime}_{\text{att}} = \phi \left( \sum_{s \in S_i} \beta_{1s} m_i(s) f \left( \bm{h}_{\text{att}, s_i} \right) \right),
\end{align*}
    with $\beta_{1s}$ being equal to 1, as all nodes have at most one outgoing edge in a rooted tree. Then, we fulfill the prerequisites of Lemma~5 from~\cite{xu2019powerful}, which states that for the right choice of $f$ and $\phi$, any two different multisets can be distinguished. Thus, $A_{\text{flow}}$ can distinguish the trees $T_1$ and $T_2$.
\end{proof}

\section{Directed Acyclic GNN Baselines} \label{app:dagnn}

In the encoder of the D-VAE model \citep{zhang2019dvae}, the aggregation corresponds to a gated sum using a mapping network $m$ and a gating network $g$, and the update function $\phi$ is a gated recurrent unit (GRU) \citep{cho2014learning}:
\begin{align}
    \bm{m}_i^{\prime} &= \sum_{j \in \mathcal{N}_{in} (i)} g(\bm{h}_j^{\prime}) \odot m(\bm{h}_j^{\prime}),\\
    \bm{h}_i^{\prime} &= \text{GRU}(\bm{h}_i, \bm{m}_i^{\prime}).
\end{align}
Another popular model is the DAGNN \citep{thost2021directed}, which also uses a GRU for the update function but the message function is an attention mechanism with model parameters $\bm{w}_1$ and $\bm{w}_2$:
\begin{align}
    \bm{m}_i^{\prime} &=  \sum_{j \in \mathcal{N}_{in} (i)} \alpha_{ij} \left( \bm{h}_i, \bm{h}_j^{\prime} \right) \bm{h}_j^{\prime},\\
    \alpha_{ij} &= \underset{j \in \mathcal{N}_{in} (i)}{\text{softmax}} \left( \bm{w}_1^{\text{T}} \bm{h}_i + \bm{w}_2^{\text{T}} \bm{h}_j^{\prime} \right).
\end{align}
Since the embeddings of the (possibly multiple) end nodes contain information on the whole DAG, they are typically used for computing the graph-level representations. After $L$ layers, the graph-level embedding can be obtained by concatenating the end node representations from all layers followed by a max-pooling across all end nodes:
\begin{align}
    \bm{h}_{\mathcal{G}} = \underset{i \in \mathcal{F}}{\text{Max-Pool}} \left( \concat_{l=0}^{L} \bm{h}_{i}^{l} \right).
\end{align}

As DAGs are treated as sequences by the above models, they can also be processed in reversed order by inverting the edges. Therefore, directed acyclic GNNs are also capable of bidirectional processing. Using the tilde notation to denote node representations in the reverse DAG, the readout function for bidirectional processing can then be expressed as:
\begin{align}
    \bm{h}_{\mathcal{G}} = \text{FC} \left( \underset{i \in \mathcal{I}}{\text{Max-Pool}} ( \concat_{l=0}^{L} \bm{\tilde{h}}_{i}^{l} ) ~\concat~  \underset{j \in \mathcal{F}}{\text{Max-Pool}} ( \concat_{l=0}^{L} \bm{h}_{j}^{l} ) \right).
\end{align}
Note that the representations of the forward and reverse processing are computed independently, which is different from the reverse and forward pass in FlowDAGNN. In all our experiments, we use bidirectional processing for D-VAE and DAGNN.

\section{Scoring Functions of Attentional GNN Baselines} \label{app:scoring}

In GAT \citep{velivckovic2018graph}, the scoring function is defined as
\begin{align}
    \displaystyle e_{\text{GAT}} \left( \bm{h}_i, \bm{h}_j \right) = \text{LeakyReLU} \left( \bm{a}^T \cdot \left[ \bm{W} \bm{h}_i \mathbin\Vert \bm{W} \bm{h}_j \right] \right).
\end{align}
Thereby, the linear layers $\bm{a}$ and $\bm{W}$ are applied consecutively, making it possible to collapse them into a single linear layer. 

In GATv2 \citep{brody2021attentive}, a strictly more expressive attention mechanism is proposed, in which the second linear layer $\bm{a}$ is applied \emph{after} the nonlinearity:
\begin{align}
    \displaystyle e_{\text{GATv2}} \left( \bm{h}_i, \bm{h}_j \right) =\bm{a}^T \text{LeakyReLU} \left(\bm{W} \cdot \left[\bm{h}_i \mathbin\Vert \bm{h}_j \right] \right).
\end{align}
Thus, GATv2 is effectively using a multi-layer perceptron (MLP) to compute the attention scores, allowing for dynamic attention compared to the static attention performed by GAT. 

Finally, TransformerConv (TC) \citep{shi2020masked} is transferring the attention mechanism of the Transformer model \citep{vaswani2017attention} to graph learning:
\begin{align}
    \displaystyle \bm{q}_i &= \bm{W}_q \bm{h}_i + \bm{b}_q,\\
    \displaystyle \bm{k}_j &= \bm{W}_k \bm{h}_j + \bm{b}_k,\\
    \displaystyle e_{\text{TC}} \left( \bm{h}_i, \bm{h}_j \right) &= \dfrac{\bm{q}_i^T \cdot \bm{k}_j}{\sqrt{d}},
\end{align}
where $\bm{q}_i \in \mathbb{R}^d$ is the query vector, $\bm{k}_j \in \mathbb{R}^d$ is the key vector and $\bm{W}_q, \bm{W}_k, \bm{b}_q, \bm{b}_k$ are trainable parameters. 

All of the above scoring functions can be extended to multi-head attention and can incorporate edge features as well. Furthermore, it is possible to include self-loops. These characteristics are naturally inherited by the corresponding FlowGNNs.

\section{DAGs and Computation Trees} \label{app:dag_tree}

\begin{figure}[!ht]
\begin{center}
\includegraphics[width=0.618\textwidth]{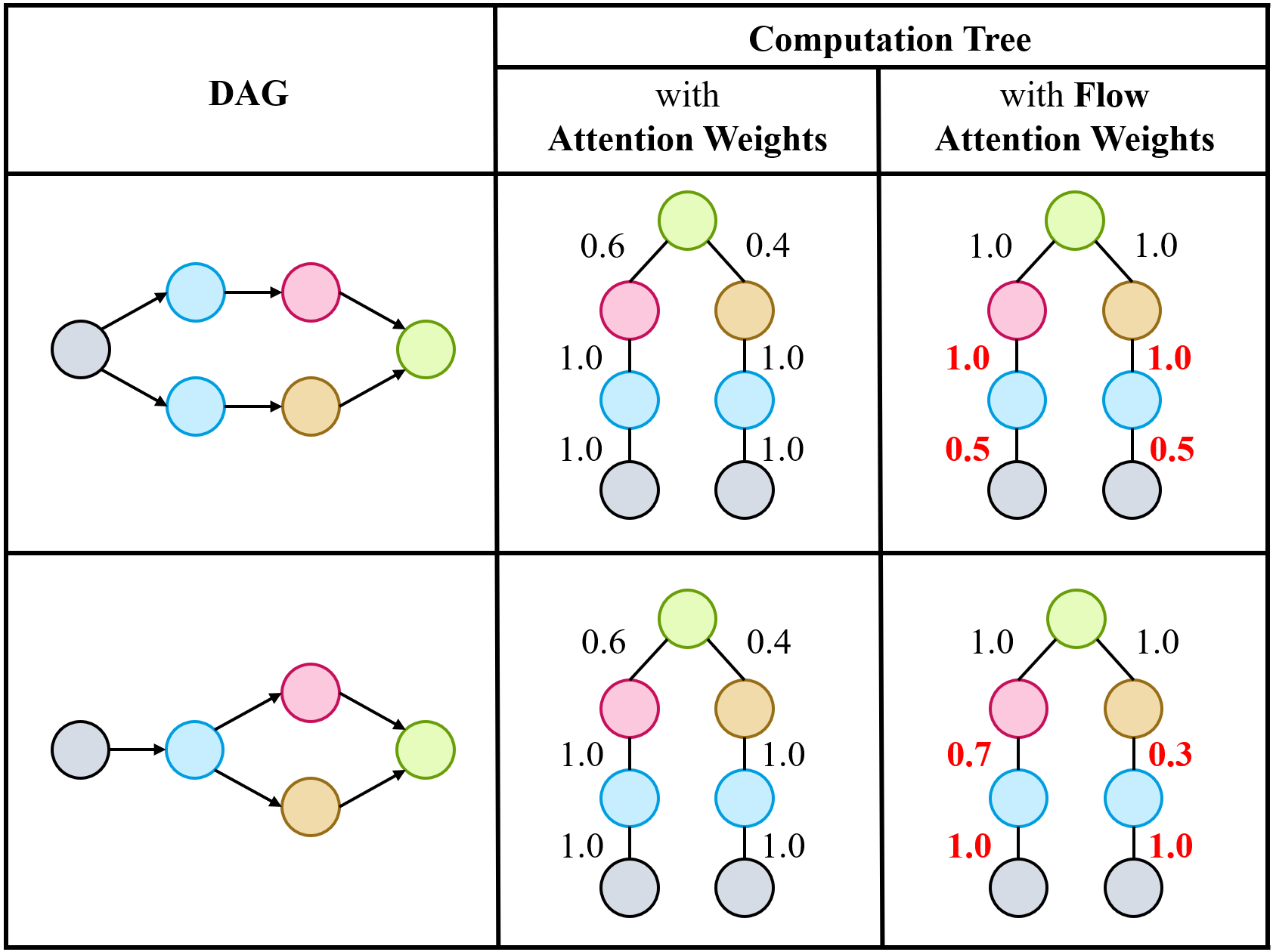}
\end{center}
\caption{\label{fig:dag_tree} Two non-isomorphic DAGs together with their corresponding computation trees, which are equivalent. Distinct node features are visualized by different colors. The middle and right columns show some example standard and flow attention weights. While the standard attention weights are always the same for both DAGs, the flow attention weights are different.}
\end{figure}

Fig.~\ref{fig:dag_tree} again shows the two different DAG structures from the examples in Fig.~\ref{fig:intro} together with their corresponding computation trees as defined in Section~\ref{prelim}. Although the two DAGs are different, they have the same computation tree. Since standard attention weights are computed by normalizing over incoming neighbors, they are the same for both DAGs. However, the flow attention weights are obtained by normalizing across outgoing neighbors. Since the grey and blue nodes (colors represent distinct node features) exhibit different outgoing neighborhoods in the two DAGs, the flow attention weights are different. Thus, a flow-attentional directed acyclic GNN can distinguish the two DAGs while a standard attentional one cannot.

\section{Example Circuit Motivating the Reverse Pass in FlowDAGNN} \label{app:example}

Fig.~\ref{fig:example_circuit} shows an example for an electronic circuit modeled as a DAG, which motivates the necessity for the reverse pass in FlowDAGNN. If FlowDAGNN would only compute node representations via the forward pass, the flow attention weight $\beta_{ij}$ would only depend on all ancestors of node $i$. This means that the edge from \emph{In} to $R_1$ in the upper branch would receive the same flow attention weight as the edge from \emph{In} to $R_1$ in the lower branch. However, since $R_1 \ll R_2$, the electrical current in the upper branch would be much smaller than the current in the lower branch. Therefore, FlowDAGNN would not be capable of modeling the electrical current flow via the flow attention weights. Only if the reverse pass is applied before the forward pass, the flow attention weights can also be conditioned on the descendants of node $i$.

\begin{figure}[!h]
\begin{center}
\includegraphics[width=0.5\textwidth]{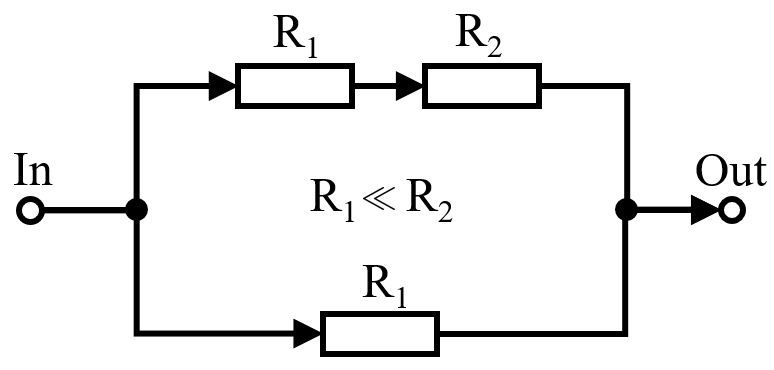}
\end{center}
\caption{\label{fig:example_circuit} A simple example circuit described as a DAG, which explains why the reverse pass is necessary in FlowDAGNN. Without the reverse pass, the flow attention weights from the input node to the $R_1$ nodes would be identical, whereas the electrical current flow is different due to $R_1 \ll R_2$.}
\end{figure}

\section{Details on PowerGraph and Ckt-Bench101} \label{app:data}

\textbf{PowerGraph.} The PowerGraph dataset \citep{varbella2024powergraph} contains four different transmission systems (IEEE24, IEEE39, IEEE118, UK) with unique graph structures. For the cascading failure analysis, each test system was simulated for different operating conditions together with a specific initial outage, resulting in a large number of graph samples. The number of nodes and edges in each test system as well as the number of graph samples are reported in Tab.~\ref{tab:powergraph1}.

\begin{table}[!t]
\caption{\label{tab:powergraph1}Number of nodes and edges for each test system as well as the number of corresponding graph samples contained in the PowerGraph dataset (see \citet{varbella2024powergraph}).}
\begin{center}
\begin{tabular}{@{}lccc@{}}
Test system & No. Nodes & No. Edges & No. Graphs \\ \hline
\rule{0pt}{2.5ex}IEEE24      & 24        & 38        & 21500      \\
IEEE39      & 39        & 46        & 28000      \\
IEEE118     & 118       & 186       & 122500     \\
UK          & 29        & 99        & 64000      
\end{tabular}
\end{center}
\end{table}

Tab.~\ref{tab:powergraph2} shows how the classification labels are distributed in the PowerGraph dataset for each test system. Due to the strong class imbalance, the balanced accuracy BA is used as the evaluation metric \citep{brodersen2010balanced}, which is defined as the mean of sensitivity and specificity:
\begin{align}
    \text{BA} = \dfrac{1}{2} \left( \dfrac{\text{TP}}{\text{TP} + \text{FN}} + \dfrac{\text{TN}}{\text{TN} + \text{FP}} \right).
\end{align}
Here, TP/FP/TN/FN represent true/false positive/negative predictions.

\begin{table*}[!t]
\caption{\label{tab:powergraph2} Distribution of the classification labels for each test system in the PowerGraph dataset (see \citet{varbella2024powergraph}). DNS stands for "demand not served" and c.f. stands for "cascading failure", corresponding to at least one more tripping branch after the initial outage.}
\begin{center}
\begin{tabular}{@{}lcccc@{}}
\multicolumn{1}{l|}{}            & \multicolumn{1}{c}{\textbf{Category A}}            & \multicolumn{1}{c}{\textbf{Category B}}            & \multicolumn{1}{c}{\textbf{Category C}} & \textbf{Category D} \\
\multicolumn{1}{c|}{}            & \multicolumn{1}{c}{DNS \textgreater~0 MW} & \multicolumn{1}{c}{DNS \textgreater~0 MW} & \multicolumn{1}{c}{DNS = 0 MW} & DNS = 0 MW \\
\multicolumn{1}{l|}{Test system} & c. f.                                     & no c. f.                                  & c. f.                          & no c. f.   \\ \hline
\rule{0pt}{2.5ex}IEEE24                           & 15.8\%                                    & 4.3\%                                     & 0.1\%                          & 79.7\%     \\
IEEE39                           & 0.55\%                                    & 8.4\%                                     & 0.45\%                         & 90.6\%     \\
IEEE118                          & \textgreater{}0.1\%                       & 5.0\%                                     & 0.9\%                          & 93.9\%     \\
UK                               & 3.5\%                                     & 0\%                                       & 3.8\%                          & 92.7\%    
\end{tabular}
\end{center}
\end{table*}

\textbf{CktBench-101.} The CktBench-101 dataset from the Open Circuit Benchmark \cite{dong2023cktgnn} contains 10,000 artificially generated operational amplifiers represented as DAGs. Fig.~\ref{fig:cktbench101} shows the distribution of the number of nodes and the number of edges among all graphs in the dataset. The average number of nodes is $9.6$ with a standard deviation of $2.1$. The average number of edges is $14.5$ with a standard deviation of $5.3$. We are using the most recent update of the CktBench-101 dataset, which does not contain any failed simulations anymore.

\begin{figure}[!ht]
\begin{center}
\includegraphics[width=0.75\textwidth]{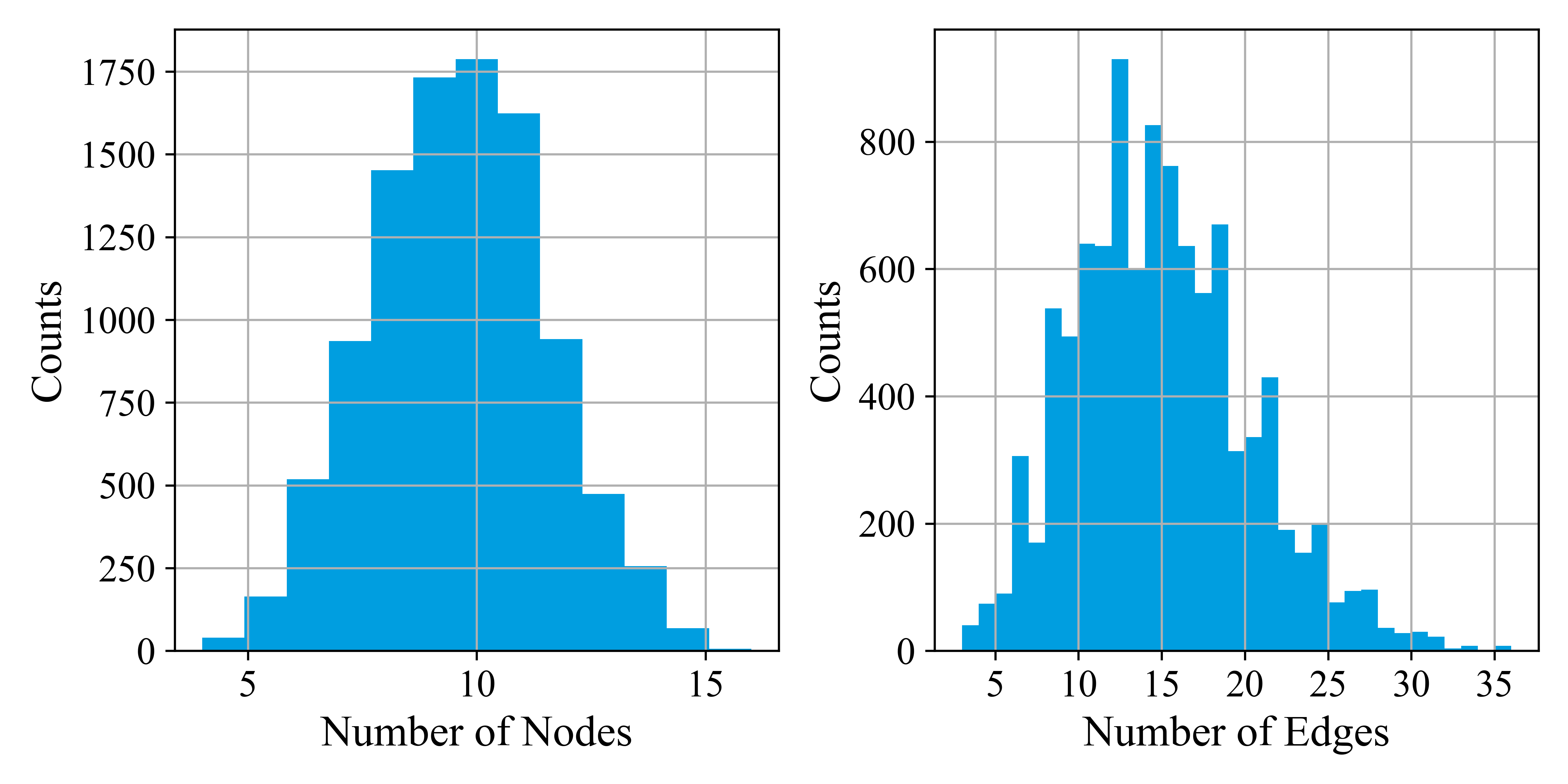}
\end{center}
\caption{\label{fig:cktbench101} Distribution of the number of nodes (left) and number of edges (right) within the Ckt-Bench101 dataset \citep{dong2023cktgnn}.}
\end{figure}

\section{Additional Experiments}\label{app:add_exp}

\subsection{Correlation between (Flow) Attention Weights}\label{app:corr}

\begin{figure}[!ht]
\begin{center}
\includegraphics[width=0.85\textwidth]{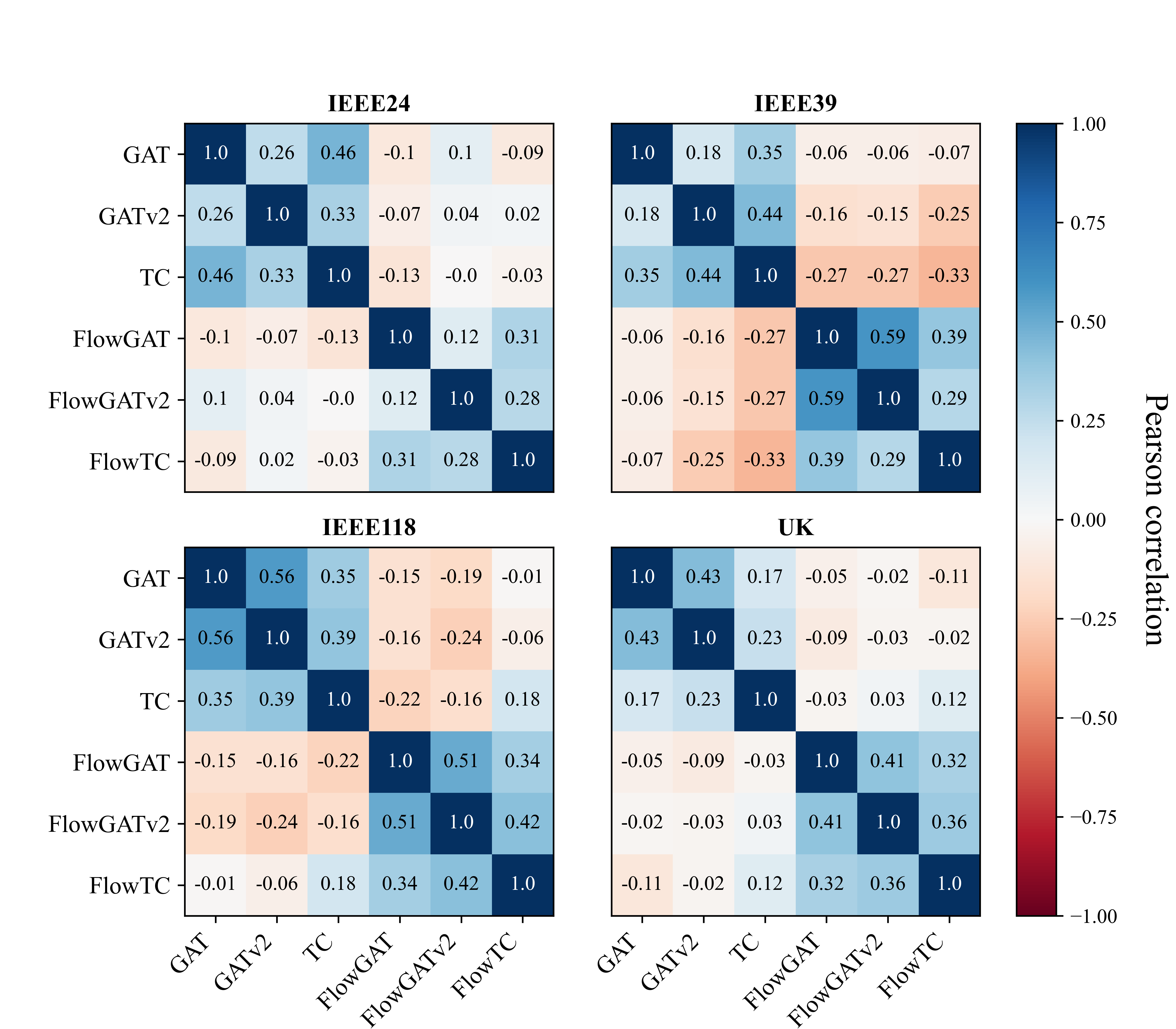}
\end{center}
\caption{\label{fig:correlations} The Pearson correlation between the (flow) attention weights of different models on the four test systems from the PowerGraph dataset, which were also used in the experiments in Sec.~\ref{experiments}.}
\end{figure}

Fig.~\ref{fig:correlations} shows the Pearson correlation between the (flow) attention weights of different models trained on the four test systems contained in the PowerGraph dataset. Models with the same type of attention (e.g., GAT and GATv2, or FlowGAT and FlowTC) exhibit small to medium positive correlations (0.17 - 0.56) between their attention weights. On the other hand, while models with different types of attention (e.g., GAT and FlowGAT, or GATv2 and FlowTC) do not show any considerable correlations on the IEEE24 and UK test systems, small negative correlations can be observed between attention and flow attention weights on the IEEE39 and IEEE118 test systems. These small negative correlations can be explained by a higher number of links between nodes with very different node degrees, resulting in small standard attention weights and large flow attention weights or vice versa, depending on the edge direction. Apart from that, the difference in normalization causes the flow-attentional GNNs to learn attention patterns different to those of their corresponding base models, resulting in performance improvements as demonstrated by our experiments.

\subsection{Transformers on PowerGraph}\label{app:transformers}

\begin{table}[t]
\caption{\textbf{Test set balanced accuracy ($\%, \uparrow$)} and \textbf{F1-score (macro-averaged, $\%, \uparrow$)} of state-of-the-art Graph Transformer models in standard configuration for the cascading failure analysis (multiclass classification) on four different power grid test systems from the PowerGraph dataset. Reported results represent the average over five training runs with different random seeds along with the standard deviation. The best result for each test system is marked in bold.}
\label{tab:transformers}
\begin{center}
\resizebox{\textwidth}{!}{
\begin{tabular}{@{}lcccccccc@{}}
&\multicolumn{2}{c}{\bf IEEE24} &\multicolumn{2}{c}{\bf IEEE39} &\multicolumn{2}{c}{\bf IEEE118} &\multicolumn{2}{c}{\bf UK}\\ 
\hline
\rule{0pt}{2.5ex}\textbf{MODEL} & Bal. Acc. ($\uparrow$) & Macro-F1 ($\uparrow$) & Bal. Acc. ($\uparrow$) & Macro-F1 ($\uparrow$) & Bal. Acc. ($\uparrow$) & Macro-F1 ($\uparrow$) & Bal. Acc. ($\uparrow$) & Macro-F1 ($\uparrow$) \\
\hline \hline
\rule{0pt}{2.5ex}SAT & 99.6 $\pm$ 0.3 & 99.0 $\pm$ 0.6 & 99.4 $\pm$ 0.2 & 98.8 $\pm$ 0.5 & 99.8 $\pm$ 0.1 & 99.6 $\pm$ 0.1 & 99.4 $\pm$ 0.1 & 98.9 $\pm$ 0.3 \\
GPS & 99.2 $\pm$ 0.2 & 98.5 $\pm$ 0.4 & 98.6 $\pm$ 0.4 & 97.4 $\pm$ 0.7 & 99.8 $\pm$ 0.1 & 99.5 $\pm$ 0.2 & 99.2 $\pm$ 0.1 & 98.6 $\pm$ 0.3 \\
Exphormer & 99.3 $\pm$ 0.3 & 98.8 $\pm$ 0.5 & 98.1 $\pm$ 0.8 & 97.2 $\pm$ 1.5 & 99.0 $\pm$ 0.6 & 98.9 $\pm$ 0.7 & 98.7 $\pm$ 0.4 & 98.3 $\pm$ 0.3
\end{tabular}
}
\end{center}
\end{table}

Graph Transformers have recently emerged as a promising alternative to message-passing GNNs \citep{shehzad2024graph}. These models include global attention modules in addition to local message-passing layers. This enables a better modeling of long-range interactions and reduces over-smoothing. While showing better empirical performance across many graph datasets and learning tasks, the quadratic complexity of global attention mechanisms limits their scalability to larger graphs \citep{behrouz2024graph}, e.g., real-world power grids with thousands of nodes, particularly for real-time decision making.\\
In Tab.~\ref{tab:transformers}, we report additional experimental results on the PowerGraph dataset (cascading failure analysis) with state-of-the-art Transformer models for graph learning: SAT \citep{chen2022structure}, GraphGPS \citep{rampavsek2022recipe}, and Exphormer \citep{shirzad2023exphormer}. Thereby, we train all three models with 3 layers and a hidden dimension of 32, and adopt the remaining hyperparameters from the original publications. On all test systems, these models perform considerably better compared to all message-passing GNNs listed in Tab.~\ref{tab:cfa_multi}. However, the superior performance comes at the cost of increased computational complexity. Depending on the task at hand, it is therefore still very valuable to achieve high performance with pure local message-passing GNNs, such as the flow-attentional GNNs. \\
Additionally, note that many Graph Transformers rely on message-passing GNNs to learn local node representations. Therefore, they could also be used with a flow-attentional GNN, e.g., FlowGAT, FlowGATv2, or FlowTC. In future work, we want to investigate the usage of flow-attentional GNNs compared to other local message-passing GNNs within Transformer frameworks like GraphGPS, since we believe that their performance on flow graph datasets could be further improved in this way.

\subsection{Experiments on Standard Graph Datasets}\label{app:standard}

\begin{table}[t]
\caption{Test set results for flow-attentional GNNs and their standard-attentional base models on popular standard graph datasets beyond the flow graph context. The performances are averaged over 10 runs with different random seeds. The best results for each dataset are marked in bold.}
\label{tab:standard}
\begin{center}
\resizebox{0.615\textwidth}{!}{
\begin{tabular}{@{}lcccc@{}}
& \textbf{Cora} & \textbf{CiteSeer} & \textbf{PubMed} & \textbf{ogbg-molhiv}\\
\hline
\rule{0pt}{2.5ex}\textbf{MODEL} & Acc. ($\%, \uparrow$) & Acc. ($\%, \uparrow$) & Acc. ($\%, \uparrow$) & ROC-AUC ($\%, \uparrow$) \\
\hline \hline
\rule{0pt}{2.5ex}GAT & \textbf{81.2 $\pm$ 1.1} & 69.1 $\pm$ 0.8 & 76.6 $\pm$ 0.5 & 74.9 $\pm$ 3.7 \\
\textbf{FlowGAT} & 80.3 $\pm$ 0.8 & 67.7 $\pm$ 1.3 & \textbf{77.4 $\pm$ 0.9} & 75.9 $\pm$ 0.6 \\
\hline
\rule{0pt}{2.5ex}GATv2 & 79.6 $\pm$ 1.4 & \textbf{69.6 $\pm$ 0.7} & \textbf{77.4 $\pm$ 0.4} & 74.5 $\pm$ 1.9 \\
\textbf{FlowGATv2} & 80.4 $\pm$ 1.0 & 67.7 $\pm$ 1.9 & 76.7 $\pm$ 1.1 & 75.8 $\pm$ 1.5 \\
\hline
\rule{0pt}{2.5ex}TC & 78.2 $\pm$ 1.3 & 67.3 $\pm$ 1.2 & 75.9 $\pm$ 0.7 & 77.5 $\pm$ 0.7 \\
\textbf{FlowTC} & 77.3 $\pm$ 0.6 & 67.3 $\pm$ 1.1 & 74.6 $\pm$ 0.5 & \textbf{77.7 $\pm$ 0.7} \\
\end{tabular}
}
\end{center}
\end{table}

In Sec.~\ref{beyondflow}, we argued that flow-attention might be less effective on graph datasets outside of the flow graph context, because these graphs do not require an information distribution across outgoing neighbors. In addition to that, FlowGNNs might map non-isomorphic but equivalent informational graphs to different representations due to Theorem~\ref{th:flow_dag}. On the other hand, flow-attention could be beneficial on graphs with repetitive node features (e.g., molecular graphs) due to Lemma~\ref{lem:flow_attention_distribution}.\\
To complement this discussion, we performed additional experiments with flow-attentional GNNs on standard graph datasets, including node classification on the widely used citation networks Cora \citep{mccallum2020automating}, CiteSeer \citep{sen2008collective}, and PubMed \citep{namata2012query}, as well as graph classification on the molecular property prediction dataset ogbg-molhiv \citep{hu2020open}. Thereby we compared FlowGAT, FlowGATv2, and FlowTC to the corresponding standard-attentional baselines.\\
For all datasets, we utilized public data splits and optimized hyperparameters on the validation set using the TPE algorithm from Optuna \citep{akiba2019optuna}. For the ogbg-molhiv, we used sum pooling followed by a multi-layer perceptron for graph-level readout. We report the test set results for the best hyperparameter configurations of each model in Tab.~\ref{app:standard}.\\
On the citation networks, the performance of flow-attentional GNNs is slightly lower than that of their standard-attentional baseline in most cases. In contrast, they perform slightly better compared to their standard-attentional counterparts on ogbg-molhiv. These observations are consistent with our discussion on the applicability of FlowGNNs beyond flow graphs in Sec.~\ref{beyondflow}.

\subsection{Efficiency Comparison}\label{app:eff}

\begin{figure}[!ht]
\begin{center}
\begin{minipage}[c]{0.45\textwidth}
\includegraphics[width=\textwidth]{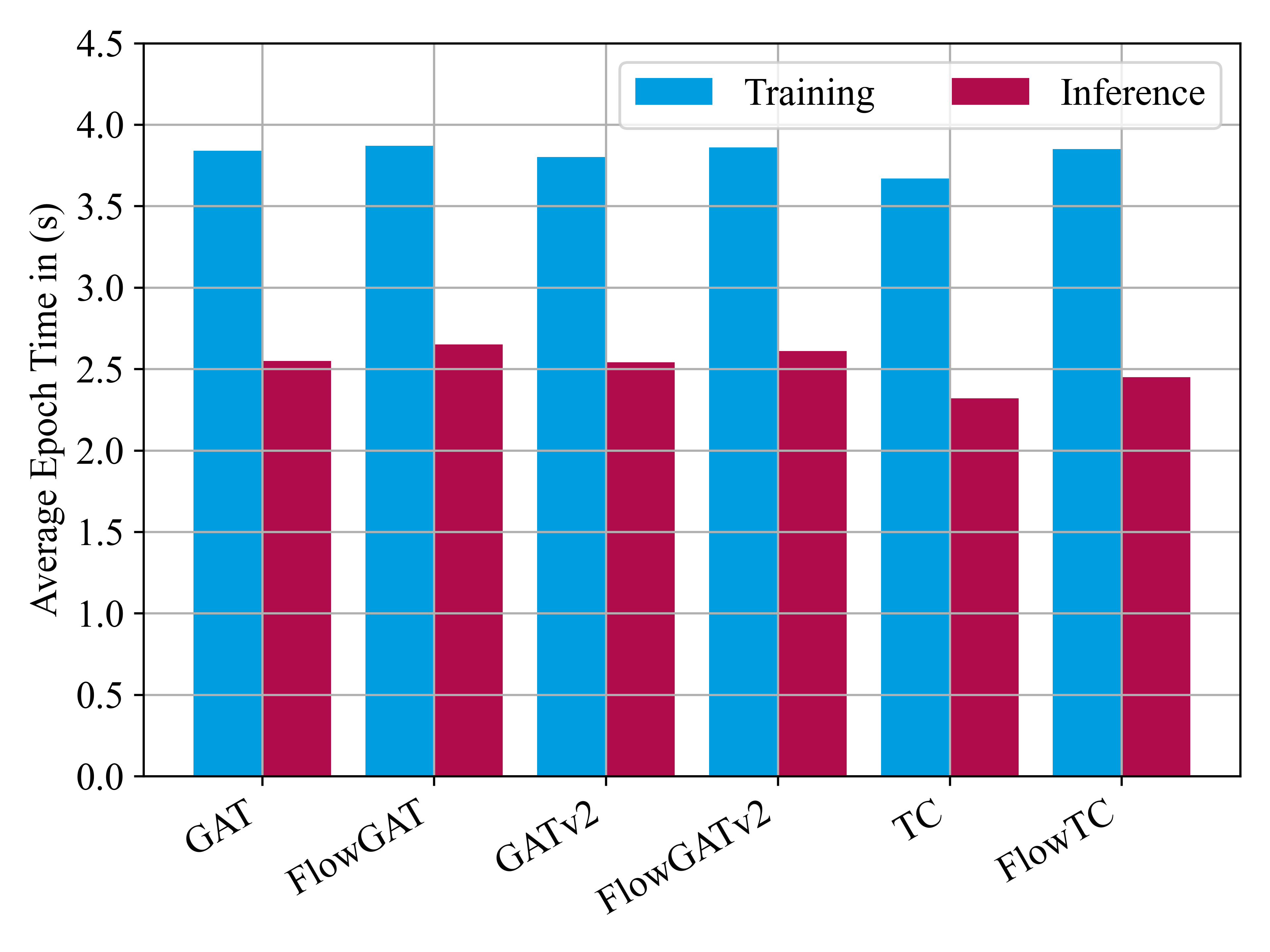}
\caption{\label{fig:eff_powergraph} The training and inference times for processing the training set of the IEEE24 dataset from the PowerGraph benchmark, averaged over 10 runs.}
\end{minipage}
\hfill
\begin{minipage}[c]{0.45\textwidth}
\includegraphics[width=\textwidth]{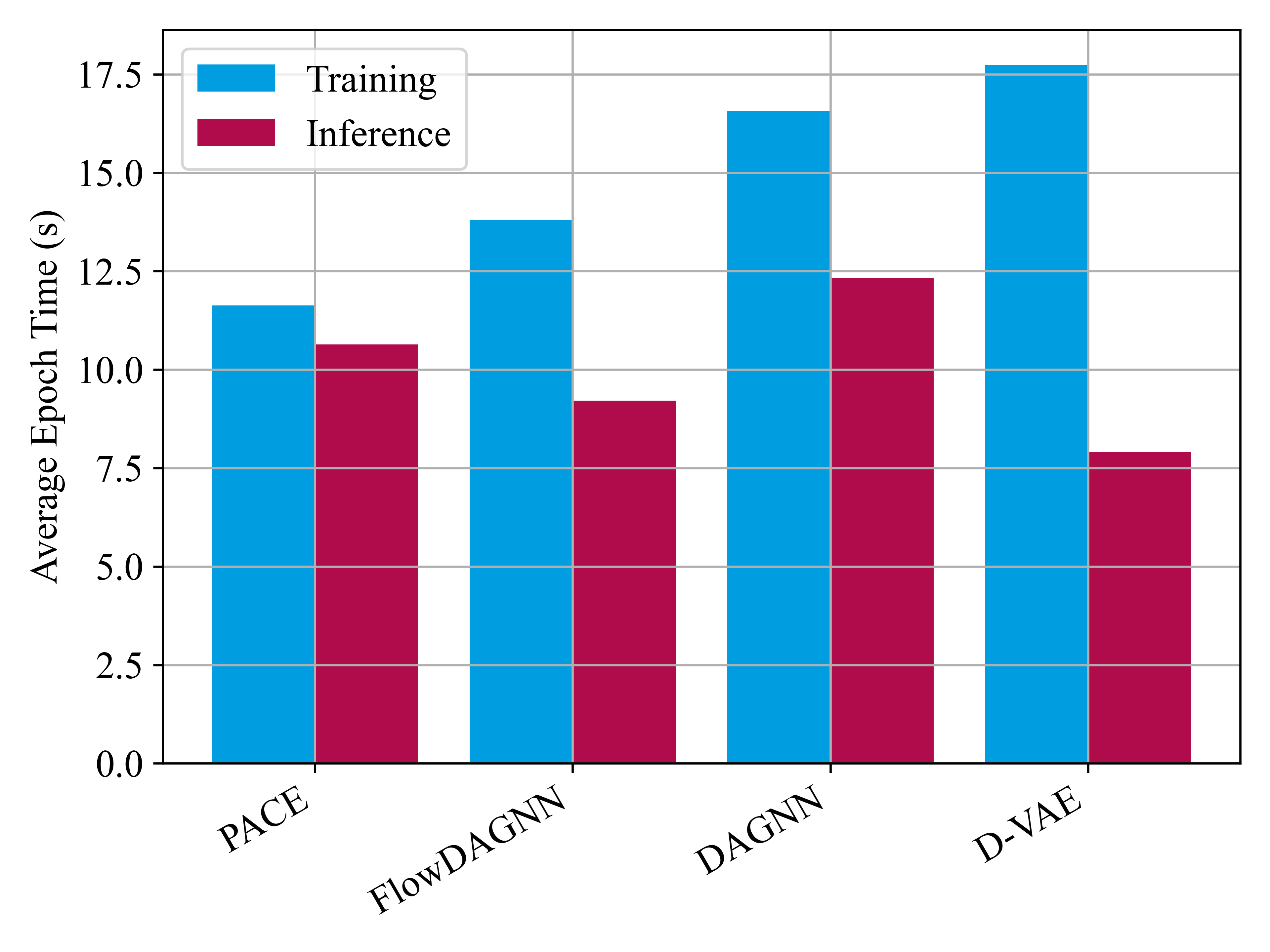}
\caption{\label{fig:eff_cktbench} The training and inference times for processing the training set of the CktBench-101 dataset from the Open Circuit Benchmark, averaged over 10 runs.}
\end{minipage}
\end{center}
\end{figure}

\textbf{Undirected Graphs.} We compare the average training and inference times of the considered models for processing the training set of the IEEE24 dataset from the PowerGraph benchmark using a batch size of 64 (see Fig.~\ref{fig:eff_powergraph}). We do not observe significant differences in computational efficiency between any flow-attentional GNN and its standard-attentional counterpart.

\textbf{DAGs.} We compare the average training and inference times of the considered directed acyclic GNN models for processing the training set of the CktBench-101 dataset using a batch size of 64 (see Fig.~\ref{fig:eff_cktbench}). Thereby, we compare our implementation of FlowDAGNN against the original implementations from the authors of the baseline models. While PACE is the most efficient model, FlowDAGNN only shows a slightly higher training time. DAGNN and D-VAE (using bidirectional processing, see App.~\ref{app:dagnn}) appear to be slightly less efficient than FlowDAGNN. Note that these differences could be caused not only by architectural but also by implementation differences.

All efficiency experiments were carried out on NVIDA V100 GPUs.

\end{document}